\documentclass{article} 
\usepackage{iclr2024_conference,times}
\usepackage{graphicx}

\usepackage{amsmath,amsfonts,bm}









\def\eqref#1{equation~\ref{#1}}









\def\1{\bm{1}}

\def\eps{{\epsilon}}








\def\ve{{\bm{e}}}



\DeclareMathAlphabet{\mathsfit}{\encodingdefault}{\sfdefault}{m}{sl}
\SetMathAlphabet{\mathsfit}{bold}{\encodingdefault}{\sfdefault}{bx}{n}











\newcommand{\R}{\mathbb{R}}



\DeclareMathOperator*{\argmin}{arg\,min}

\usepackage{hyperref}
\usepackage{url}

\usepackage{amsmath}
\usepackage{amssymb}
\usepackage{mathtools}
\usepackage{amsthm}
\usepackage{xcolor}
\usepackage{booktabs}
\usepackage[bbgreekl]{mathbbol}
\newif
\ifdraft
\draftfalse
\def\ale#1{\ifdraft\textcolor{magenta}{#1}\else{#1}\fi}
\DeclareSymbolFontAlphabet{\mathbbl}{bbold}
\ifdraft
\usepackage{lipsum}
\usepackage[notref,notcite,color]{showkeys}
\usepackage[cam,width=10in,height=12in,center]{crop}
\definecolor{refkey}{rgb}{0,.6,0}
\definecolor{labelkey}{cmyk}{0, 1, 0, 0}
\fi

\def\R{\mathbbl{R}}
\def\eps{\varepsilon}
\def\ch{\mathop{\rm ch}\nolimits}
\def\pa{\mathop{\rm pa}\nolimits}

\def\proj{\mathop{\textsc{pa}}\nolimits}
\def\inweights{\mathop{\textsc{in}}\nolimits}
\def\allweights{{\boldsymbol{w}}}
\def\v{{\boldsymbol{v}}}
\def\u{{\boldsymbol{u}}}
\def\nuu{{\boldsymbol{\nu}}}
\def\a{{\boldsymbol{a}}}

\definecolor{dgreen}{rgb}{0, 0.5, 0}
\definecolor{orange}{rgb}{1, 0.37, 0.12}
\def\input{u}

\theoremstyle{plain}
\newtheorem{proposition}{Propostion}
\newtheorem{lemma}{Lemma}
\newtheorem{theorem}{Theorem}
\newtheorem{corollary}{Corollary}
\newtheorem{conjecture}{Conjecture}

\theoremstyle{definition}
\newtheorem{definition}{Definition}
\newtheorem{example}{Example}

\theoremstyle{remark}
\newtheorem{remark}{Remark}

\def\state{y} 
\def\control{\alpha} 
\def\controlbis{\beta} 
\def\time{t}
\def\statevar{\xi} 
\def\costatevar{\eta} 
\def\controlvar{a} 
\def\timevar{s}

\def\f{f}

\def\cost{C}
\def\loss{\Lambda}
\def\controlspace{\mathcal{A}}
\def\valuef{v}

\def\costatevar{\rho}
\def\statedim{n} 
\def\controldim{N} 

\title{
Nature-Inspired Local Propagation}


\author{Alessandro Betti\\
Inria, Lab I3S, MAASAI, Universit\`e C\^ote d'Azur, Nice, France\\
\texttt{alessandro.itteb@gmail.com} \\
\And
Marco Gori\\
DIISM, University of Siena, Siena, Italy\\
\texttt{marco@diism.unisi.it} \\
}

%

\iclrfinalcopy 
\begin{document}

\maketitle

\begin{abstract}
The spectacular results achieved in machine learning, including the
recent advances in generative AI, rely on large data collections. On the
opposite, intelligent processes in nature arises without the need for such
collections, but simply by on-line processing of the environmental information.
In particular, natural learning processes rely on mechanisms where data
representation and learning are intertwined in such a way to respect
spatiotemporal locality. This paper shows that such a feature arises from
a pre-algorithmic view of learning that is inspired by related studies
in Theoretical Physics. We show that the algorithmic
interpretation of the derived ``laws of learning'', which takes the structure
of Hamiltonian equations, reduces to Backpropagation when the speed of
propagation goes to infinity. This opens the doors to 
machine learning studies based on full on-line information processing
that are based the replacement of Backpropagation with 
the proposed spatiotemporal local algorithm. 
\end{abstract}

\section{Introduction}
By and large, the spectacular results of Machine Learning in nearly any application
domain are strongly relying on large  data collections along with associated 
professional skills. Interestingly, the successful artificial schemes that 
we have been experimenting under this framework are far away from the solutions
that Biology seems to have discovered.
We have recently seen a remarkable effort in the scientific community of 
explore biologically inspired models (e.g. see~\cite{scellier2017equilibrium}, \cite{kendall2021gradient, scellier2021deep, laborieux2022holomorphic}) where
the crucial role of temporal information processing it is clearly identified. 

While this paper is related to those investigations, it is based on 
more strict assumptions on environmental interactions that might stimulate 
efforts towards  a more radical transformation of machine 
learning with emphasis on temporal processing.  
In particular, we assume that learning and inference develop jointly under a 
nature based protocol of environmental interactions and then we suggest developing 
computational learning schemes regardless of biological solutions.
Basically,  the agent is not given the privilege of recording 
the temporal stream, but only to represent it properly by appropriate abstraction
mechanisms. While the agent can obviously use its internal memory for 
storing those representations, we assume that it cannot access data collection.
The agent can only use buffers of limited dimensions for storing the processed
information. From a cognitive point of view, those small buffers  allows the agent to 
process the acquired information backward in time by implementing a 
sort of focus of attention. 

We propose a pre-algorithmic approach which derives
from the formulation of learning as an Optimal Control problem and 
propose an approach to its solution that is also inspired by 
principles of Theoretical Physics.
We formulate the continuous learning problem to emphasize how 
optimization theory brings out solutions based on differential equations 
that recall similar laws in nature. The discrete counterpart, which is 
more similar to recurrent neural network algorithms that can be found in the 
literature, can promptly be derived by numerical solutions.  

Interestingly,  we show that the on-line computation described in the paper
yields spatiotemporal locality, thus addressing also the longstanding debate on
Backpropation biological plausibility. 
Finally, the paper shows that the conquest of locality opens
up a fundamental problem, namely that of approximating the solution of
Hamilton's equations with boundary conditions using only initial conditions.
A few insights on the solution of this problem are given for the task of
tracking in optimal control, which opens the doors of a massive investigation
of the proposed approach.



\section{Recurrent Neural Networks and spatiotemporal locality} \label{sec:model}
We put ourselves in the genral case where the computational 
model that we are considering is based on a digraph 
$D=(V,A)$ where $V=\{1,2,\dots, n\}$ is the set of vertices and 
and $A$ is the set of directed arches 
that defines the structure of the graph. Let $\ch(i)$ denote
the set of vertices that are childrens of vertex $i$ and with
$\pa(i)$ the set of vertices that are parents of vertex $i$
for any given $i\in V$.
More precisely we 
are interested in the computation of neuron outputs over a 
temporal horizon $[0,T]$. Formally this amounts to determine to
assign to each vertex $i\in V$ a trajectory $x_i$ that 
is computed parametrically in terms of the other neuron outputs 
and in terms of an environmental information,
mathematically represented by a trajectory\footnote{In the reminder of the paper we will try
whenever possible to formally introduce functions by clearly stating
domain a co-domain.
In particular whenever the function acts on a product space we will try
to use a consistent notation for the elements in the various
sets that define the input so that we can re-use such notation to denote
the partial derivative of such function. For instance let us suppose that
$f\colon A\times B\to\R$ is a function that maps $(a,b)\mapsto f(a,b)$
for all $a\in A$ and $b\in B$. Then 
we will denote with $f_a$ \emph{the function}
that represents the partial derivative
of $f$ with respect to its first argument, with $f_b$ the
partial derivative of $f$ with respect to its second
argument \emph{as a function} and so on. We will instead denote, for
instance, with $f_a(x,y)$ the element of $\R$ that represent the value
of $f_a$ on the point $(x,y)\in A\times B$.}
$u\colon
[0,+\infty)\to\R^d$. We will assume that
the output of the first $d$ neurons (i.e the value of
$x_i$ for $i=1,\dots, d$) matches the value of the components of the input:
$x_i(t) = u_i(t)$ for $i=1,\dots d$ and $\forall t\in[0,T]$.
In order to consistently interpret the first $d$ neurons as input we
require two additional property of the graph structure:
\begin{align}
&\pa(i)=\emptyset\quad \forall i=1,\dots,d;
\label{eq:input_do_not_have_parents}\\
&\pa(\{d+1,\dots, n\})\supset \{1,\dots, d\}
\label{eq:all_inputs_are_connected}.
\end{align}
Here \eqref{eq:input_do_not_have_parents} says that an input neuron
do not have parents, and it also implies that no self loops are allowed
for the input neurons.
On the other hand \eqref{eq:all_inputs_are_connected} means that
all input neurons are connected to at least one other neuron
amongst $\{d+1,\dots, n\}$.

We will denote with $x(t)$ (without a subscript)
the ordered list of all the output of the neurons at time $t$
except for the input neurons,
$x(t):=(x_{d+1}(t),\dots, x_n(t))$, and with this definition we can 
represent $x(t)$ for any $t\in[0,T]$ as a vector in the euclidean 
space $\R^{n-d}$. This vector is usually called the \emph{state} of the
network since its knowledge gives you the precise value of each
neuron in the net.
The parameters of the model are instead associated to
the arcs of the graph via the map $(j,i)\in A\mapsto w_{ij}$
where $w_{ij}$ assumes values on $\R$. We will denote with
$w_{i*}(t)\in\R^{|\pa(i)|}$ the vector composed of all the weights
corresponding to arches of the form $(j,i)$. If we let $N:=\sum_{i=1}^n
|\pa(i)|$ the total number of weights of the model we also define
$\R^N\ni\allweights(t):=(w_{1*}(t),
\dots, w_{n*}(t))$ the concatenation of all the weights of the network.
Finally we will assume that the output of the model is computed
in terms of a subset of the neurons. More precisely we will assume that
a vector of $m$ indices $(i_1,\dots, i_m)$ with $i_k\in\{d+1,\dots, n\}$
and at each temporal instant the output of the net
is a function $\pi\colon\R^m\to\R^h$ of $(x_{i_1},\dots,x_{i_m})$, that
is $\pi(x_{i_1},\dots,x_{i_m})$ is the output of our model.
For future convenience we will denote  $O=\{
i_1,\dots, i_m\}$.

\paragraph{Temporal locality and causality}
In general we are interested in computational schemes which are
both local in time and causal. Let us assume that we are working at some fixed
temporal resolution $\tau$, meaning that we can define a partition
of the half line $(0,+\infty)$, $\mathcal{P}:=\{
0=t_\tau^0<t_\tau^1<\dots<t_\tau^n<\dots\}$ with
$t_\tau^n=t^{n-1}_\tau+\tau$, then
the input signal becomes a sequence of vectors $(U^n_\tau)_{n=0}^{+\infty}$
with $U^n_\tau:= u(t^n_\tau)$ and the neural outputs and parameters can
be regarded as an approximation of the trajecotries $x$ and $\allweights$:
$X^n_\tau\approx x(t^n_\tau)$ and $W^n_\tau\approx \allweights(t^n_\tau)$
$n=1,\dots \lfloor T/\tau\rfloor$. A local computational 
rule for the neural outputs means that $X^n_\tau$ is a function of
$X_\tau^{n-l},\dots, X_\tau^{n}, \dots, X_\tau^{n+l}$,
$W_\tau^{n-l},\dots, W_\tau^{n}, \dots, W_\tau^{n+l}$
and $t_\tau^{n-l},\dots, t_\tau^{n}, \dots, t_\tau^{n+l}$, where
$l\ll T/\tau$ can be thought as the order of locality.
If we assume that $l\equiv 1$ (first order method) which means that
\begin{equation}\label{eq:locality-disc}
X^n_\tau=F(X^{n-1}_\tau, X^n_\tau, X^{n+1}_\tau,
W^{n-1}_\tau, W^n_\tau, W^{n+1}_\tau,
t^{n-1}_\tau, t^n_\tau, t^{n+1}_\tau).\end{equation}
Causality instead express the fact that only past
information can influence the current state of the variables meaning
that actually \eqref{eq:locality-disc} should be replaced by
$X^n_\tau=F(X^{n-1}_\tau, W^{n-1}_\tau, t^{n-1}_\tau)$.
Going back to the continuous description 
this equation can be interpreted as a
discretization of
\begin{equation}\label{eq:local+causality-cont}
\dot x = f(x,\allweights,t),
\end{equation}
with initial conditions.

\paragraph{Spatial locality}
Furthermore we assume that such computational scheme is local in 
time and make use only on spatially local (with respect to the structure
of the graph) quantities more precisely has the following structure 
\begin{equation}\label{eq:spatial_locality_model}
\begin{cases}
x_i(t) = u_i(t) &\hbox{for $i=1,\dots d$}\quad\hbox{and}
\quad\hbox{$\forall t\in[0,T]$};\\
c_i^{-1}\dot x_i(t)=\Phi^i(x_i(t),\proj^i(x(t)), \inweights^i(
\allweights(t)))
&\hbox{for $i=d+1,\dots, n$}\quad\hbox{and}
\quad\hbox{$\forall t\in[0,T]$},
\end{cases}
\end{equation}
Where $c_i>0$ for all $i=d+1,\dots, n$ set the velocity constant
of that controls the updates of the $i$-th neuron,
$\Phi^i\colon\R\times\R^{|\pa(i)|}\times\R^{|\pa(i)|}\to\R$
for all $i=d+1,\dots, n$ performs the mapping
$(r,\alpha,\beta)\mapsto\Phi^i(r,\alpha,\beta)$ for all
$r\in\R$, $\alpha,\beta\in\R^{|\pa(i)|}$,
$\proj^i\colon\R^{n-d}\to\R^{|\pa(i)|}$ project the vector
$\xi\in\R^{n-d}\mapsto\proj^i(\xi)$
on the subspace generated by neurons which are in $\pa(i)$ and
$ \inweights^i\colon\R^N\to\R^{|\pa(i)|}$ maps the any vector
$\u\in\R^N\mapsto \inweights^i(\u)$
 onto the space spanned by only the weights associated to
arcs that points to neuron $i$.
The assumptions summarized above describe the basic properties of a
RNN or, as sometimes is referred to when dealing with a continuous time
computation, a Continuous Time RNN \cite{sompolinsky1988chaos}.
The typical form of function $\Phi_i$, is the following
\begin{equation}\label{eq:ctrnn}
\Phi^i(r,\alpha,\beta)=-r +\sigma(\beta\cdot\alpha),\quad
\forall r\in\R \quad\hbox{and}\quad \forall \alpha,\beta\in\R^{|\pa(i)|}.
\end{equation}
where in this case $\,\cdot\,$ is the standard scalar product on
$\R^{|\pa(i)|}$ and $\sigma\colon\R\to\R$ is a nonlinear
bounded smooth function
(usually a sigmoid-like activation function). Under this assumption
the state equation in \eqref{eq:spatial_locality_model} becomes 
\begin{equation}
c_i^{-1}\dot x_i(t)=-x_i(t) +\sigma(\inweights^i(\allweights(t))
\cdot \proj^i(x(t)))\equiv
-x_i(t) +\sigma\Bigl(\sum_{j\in\pa(i)} w_{ij}x_j(t)\Bigr),
\label{RecNeuralNeteq}
\end{equation}
which is indeed the classical neural computation. Here we sketch a result
on the Bounded Input Bounded Output (BIBO) stability of this class of
recurrent neural  network which is also important for the learning process
that will be described later.

\begin{proposition}\label{prop:rnn-stability}
    The recurrent neural network defined by ODE~(\ref{RecNeuralNeteq}) 
    is  (BIBO) stable.
\end{proposition}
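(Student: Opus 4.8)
The plan is to exploit the single structural fact that tames this system: the activation $\sigma$ is bounded. Set $M := \sup_{s\in\R}|\sigma(s)|$, which is finite by hypothesis. For each non-input neuron $i\in\{d+1,\dots,n\}$ I would rewrite~(\ref{RecNeuralNeteq}) in the affine form $\dot x_i = -c_i x_i + c_i g_i(t)$, where the nonlinear coupling to the parents, $g_i(t):=\sigma\bigl(\sum_{j\in\pa(i)} w_{ij}x_j(t)\bigr)$, satisfies $|g_i(t)|\le M$ for all $t$, uniformly in the state and in the input. The crucial observation is that boundedness of $\sigma$ severs the dependence of this bound on the a~priori unknown trajectories $x_j$ and on the weights $w_{ij}$: whatever the argument of $\sigma$ does, the forcing term is confined to $[-c_i M, c_i M]$.

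Next I would integrate the scalar linear equation by variation of constants,
\[
x_i(t) = e^{-c_i t}x_i(0) + c_i\int_0^t e^{-c_i(t-s)} g_i(s)\,ds,
\]
and take absolute values. Using $|g_i|\le M$ together with $c_i\int_0^t e^{-c_i(t-s)}\,ds = 1-e^{-c_i t}$ gives
\[
|x_i(t)| \le e^{-c_i t}|x_i(0)| + M\bigl(1-e^{-c_i t}\bigr) \le \max\bigl(|x_i(0)|,M\bigr),
\]
since the middle expression is a convex combination of $|x_i(0)|$ and $M$. The input neurons obey $x_i=u_i$ and are bounded as soon as the input $u$ is; hence every state component stays bounded by a constant depending only on the initial data and on $M$, and feeding this through the continuous readout $\pi$ yields a bounded output, which is precisely the BIBO property.

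Two points would make the argument rigorous. First, existence: the right-hand side of~(\ref{RecNeuralNeteq}) is smooth, so Picard--Lindel\"of furnishes a unique local solution, and the a~priori bound above rules out finite-time blow-up, promoting it to a global solution on $[0,T]$. Second, the same estimate admits an invariant-region reading that also sharpens the asymptotics: wherever $|x_i|>M$ one has
\[
\frac{d}{dt}|x_i| = \sign(x_i)\,\dot x_i \le -c_i\bigl(|x_i|-M\bigr) < 0,
\]
so $[-M,M]$ is forward-attracting for each coordinate and $\limsup_{t\to\infty}|x_i(t)|\le M$.

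I do not anticipate a genuine obstacle. The only trap is the reflex to control the coupling with a Gr\"onwall inequality, which would spuriously demand smallness of the weights; the whole content of the statement is that the saturating nonlinearity makes such a hypothesis unnecessary, so the decisive step is merely recognizing that the boundedness of $\sigma$ decouples the per-coordinate estimate.
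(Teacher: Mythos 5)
Your proof is correct and follows essentially the same route as the paper's: exploit the uniform bound on $\sigma$, solve the resulting scalar linear ODE by variation of constants, and bound the convolution integral. In fact your write-up is tighter than the paper's own (which omits absolute values and is loose with the constant $c_i$, written there as $\alpha$), and your additions on global existence and the forward-invariance of $[-M,M]$ are harmless refinements of the same argument.
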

\begin{proof}
See Appendix~\ref{rnn-stability}
\end{proof}

\section{Learning as a Variational Problem}\label{sec:optimization}
In the computational model described in Section~\ref{sec:model}, once
the graph $D$ and an input $u$ is assigned, the dynamics of the
model is determined solely by the functions that describes the changes of
the weights over time.
Inspired by the Cognitive Action Principle \cite{betti2019cognitive}
that formulate learning for FNN in terms of a variational problem,
we  claim that in an online setting the laws of learning
for recurrent architectures can also be
characterized by minimality of a class of functional.
In what follows we will then consider variational problems for a functional
of the form
\begin{equation}\label{eq:general-functional}
F(\allweights)=\int_0^T
\biggl[\frac{mc}{2}|\dot\allweights|^2 +c\ell(\allweights(t),
x(t;\allweights),t)\biggr]\phi(t)\, dt,
\end{equation}
where $x(\cdot,\allweights)$ is the solution of
\eqref{eq:local+causality-cont} with fixed initial
conditions\footnote{We do not explicitly indicate the dependence on
the initial condition to avoid cumbersome notation.},
$\phi\colon[0,T]\to\R$ is a strictly positive smooth function that
weights the integrand, $m>0$,
$\ell\colon \R^n\times\R^N\times[0,T]\to\overline \R_+$ is a positive
function  and
finally $c:=\sum_{i=d+1}^n c_i/(n-d)$. We discuss under which conditions
the stationarity conditions of this class
of functional can be made temporally and spatially local and how they
can be interpreted as learning rules.


\subsection{Optimal Control Approach}
The problem of minimizing the functional in
\eqref{eq:general-functional}  can be solved by making use
of the formalism of Optimal Control. The first step is to put this problem
in the canonical form by introducing an additional control variable
as follow
\begin{equation}\label{eq:funtional-G}
G(\v)=\int_0^T
\biggl[\frac{mc}{2}|\v|^2 +c\ell(\allweights(t;\v),
x(t;\v),t)\biggr]\phi(t)\, dt,
\end{equation}
where $\allweights(t;\v)$ and $x(t;\v)$ solve
\begin{equation}
\label{eq:funtional-G-constraints}
\dot x(t) = f(x(t),\allweights(t),t),\quad \hbox{and}\quad
\dot \allweights(t) = \v(t).
\end{equation}
Then the minimality conditions can be expressed in terms of the
Hamiltonian function (see Appendix~\ref{appendix:control}):
\begin{equation}\label{eq:hamiltonian-general}
H(\xi,\u,p,q,t)=
-\frac{1}{\phi(t)}\frac{q^2}{2mc}+ c\ell(\u,
\xi,t)\phi(t) + p\cdot f(\xi,\u,t),
\end{equation}
via the following general result.
\begin{theorem}[Hamilton equations]
Let $H$ be as in \eqref{eq:hamiltonian-general} and
assume that $x(0)=x^0$ and $\allweights(0)=\allweights^0$
are given. Then a minimum of the functional
in \eqref{eq:funtional-G} satisfies the Hamilton
equations:
\begin{equation}\label{eq:hamilton-general}
\begin{cases}
\dot x(t)= f(x(t),\allweights(t),t)\\
\dot\allweights(t)=-p_\allweights(t)/(mc\phi(t))\\
\dot p_x(t)=-p_x(t)\cdot f_\xi(x(t),\allweights(t),t)-
 c\ell_\xi(\allweights(t),x(t),t)\phi(t)\\
\dot p_\allweights(t)=-p_x(t)\cdot f_\u(x(t),\allweights(t),t)-
 c\ell_\u(\allweights(t),x(t),t)\phi(t)
\end{cases}
\end{equation}
together with the boundary conditions
\begin{equation}\label{eq:boundary}
p_x(T)=p_\allweights(T)=0.
\end{equation}
\end{theorem}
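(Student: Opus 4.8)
The plan is to treat the constrained minimization of $G$ in \eqref{eq:funtional-G} as an unconstrained first-order stationarity problem by the method of Lagrange multipliers, which is the function-space analogue of Pontryagin's principle. I would introduce a costate (adjoint) trajectory $p_x$ dual to the state constraint $\dot x=f(x,\allweights,t)$ and a costate $p_\allweights$ dual to $\dot\allweights=\v$, and form the augmented action
\[
J=\int_0^T\Bigl[\tfrac{mc}{2}|\v|^2\phi+c\ell\phi+p_x\cdot(f-\dot x)+p_\allweights\cdot(\v-\dot\allweights)\Bigr]\,dt.
\]
At a minimizer of $G$ there exist multipliers $p_x,p_\allweights$ so that $J$ is stationary under independent variations of $x,\allweights,\v,p_x,p_\allweights$, subject to the fixed initial data $x(0)=x^0,\ \allweights(0)=\allweights^0$ and free terminal values. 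The goal is then to read off the four equations and the two boundary conditions from this stationarity and to verify that they coincide with Hamilton's form for the $H$ in \eqref{eq:hamiltonian-general}.

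First I would vary $p_x$ and $p_\allweights$, which simply returns the dynamics $\dot x=f$ and $\dot\allweights=\v$. Next, stationarity in the control $\v$, which enters $J$ only through $\tfrac{mc}{2}|\v|^2\phi+p_\allweights\cdot\v$, gives the pointwise relation $mc\phi\,\v+p_\allweights=0$, i.e.\ $\v=-p_\allweights/(mc\phi)$; together with $\dot\allweights=\v$ this is the second Hamilton equation, and it is precisely the step that eliminates the control and produces the $-q^2/(2mc\phi)$ term of the reduced Hamiltonian. Then I would vary $x$ and $\allweights$: the terms $-p_x\cdot\dot x$ and $-p_\allweights\cdot\dot\allweights$ are integrated by parts so the time derivative is transferred onto the costates, and collecting the algebraic contributions yields $\dot p_x=-p_x\cdot f_\xi-c\ell_\xi\phi$ and $\dot p_\allweights=-p_x\cdot f_\u-c\ell_\u\phi$. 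The boundary terms produced by the integration by parts are $-p_x(T)\cdot\delta x(T)-p_\allweights(T)\cdot\delta\allweights(T)$, the $t=0$ parts vanishing because $\delta x(0)=\delta\allweights(0)=0$; since the terminal values are unconstrained, arbitrariness of $\delta x(T),\delta\allweights(T)$ forces the transversality conditions $p_x(T)=p_\allweights(T)=0$, which is \eqref{eq:boundary}. Finally I would differentiate the stated $H$ and check directly that these four equations are $\dot x=H_p,\ \dot\allweights=H_q,\ \dot p_x=-H_\xi,\ \dot p_\allweights=-H_\u$, closing the identification.

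The hard part will be making the variational argument rigorous rather than merely formal. Because $x$ is not a free trajectory but the solution $x(\cdot;\v)$ of the state ODE, one must justify that $\v\mapsto x(\cdot;\v)$ is differentiable and compute its linearization (the sensitivity equation); this is exactly what legitimizes introducing $p_x$ and integrating by parts, and it requires smoothness of $f$ and $\ell$ together with enough regularity and boundedness of the optimal trajectory. In the recurrent-network instance this well-posedness is underwritten by the BIBO stability of Proposition~\ref{prop:rnn-stability}, which keeps the state in a region where $f$ is controlled. One must also assume a minimizer exists in an admissible class (e.g.\ $\allweights\in H^{1}$, $\v\in L^{2}$) so that the integrations by parts and the fundamental lemma of the calculus of variations apply. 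The remaining effort -- bookkeeping the multiplier signs so they match the symplectic convention $\dot z=H_p,\ \dot p=-H_z$ -- is routine but is the easiest place to slip, so I would fix the sign conventions at the outset and cross-check them against the eliminated-control relation $\dot\allweights=-p_\allweights/(mc\phi)$.
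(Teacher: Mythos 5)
Your proposal is correct, but it takes a genuinely different route from the paper's own proof. You derive the necessary conditions variationally, in Pontryagin/Lagrange-multiplier style: adjoin the two dynamic constraints with costates $p_x,p_\allweights$, eliminate the control through the pointwise stationarity relation $mc\phi\,\v+p_\allweights=0$ (which is exactly where the $-q^2/(2mc\phi)$ term of \eqref{eq:hamiltonian-general} originates), integrate by parts to obtain the two adjoint equations, and read the transversality conditions $p_x(T)=p_\allweights(T)=0$ off the free terminal boundary terms. The paper instead goes through dynamic programming (Appendix~\ref{appendix:control}): it defines the value function, proves the Hamilton--Jacobi--Bellman PDE \eqref{eq:HJB} under the hypothesis that the value function is $C^1$ (Theorem~\ref{th:HJB}), and then recovers \eqref{eq:hamilton-general} by the method of characteristics, setting $p(t)=Dv(y(t),t)$, so that the boundary condition \eqref{eq:boundary} falls out of the terminal condition $v(\cdot,T)=0$ rather than from transversality. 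The trade-offs are real: the paper's route requires smoothness of the value function (and, implicitly, the second derivatives $v_{\xi_k\xi_i}$ used in the characteristics computation), an assumption that can fail in general optimal control, but it ties the costate to $Dv$, a connection the paper later exploits when motivating backward-in-time integration of the costate equations; your route avoids any hypothesis on the value function, needing instead differentiability of the control-to-state map $\v\mapsto x(\cdot;\v)$ and existence of a minimizer in an admissible class such as $H^1\times L^2$ --- precisely the gaps you flag as the nontrivial part, and which are standard to fill. One peripheral inaccuracy: your appeal to Proposition~\ref{prop:rnn-stability} is not pertinent here, since BIBO stability concerns the specific sigmoidal network \eqref{RecNeuralNeteq}, while the theorem is stated for a general $f$; the needed regularity should simply be hypothesized on $f$ and $\ell$.
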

\begin{proof}
See Appendix~\ref{appendix:control}
\end{proof}

\subsection{Recovering spatio-temporal locality}\label{sec:stlocality}
Starting from the general expressions for the stationarity conditions
expressed by \eqref{eq:hamilton-general} and \eqref{eq:boundary}, 
we will now discuss how the temporal and spatial locality assumptions
that we made on our computational model in Section~\ref{sec:model}
leads to spatial and temporal locality of the update rules of the
parameters $\allweights$.

\paragraph{Temporal Locality}
The local structure of \eqref{eq:funtional-G-constraints}, that comes from
the locality of the computational model that we discussed in
Section~\ref{sec:model} guarantees the locality of Hamilton's
equations~\ref{eq:hamilton-general}. However the functional
in \eqref{eq:funtional-G} has a global nature (it is an integral over the
whole temporal interval) and the differential term $m |\v|^2/2$ links
the value of the parameters across near temporal instant giving rise to
boundary conditions in \eqref{eq:boundary}. This also means that, strictly
speaking \eqref{eq:hamilton-general} and \eqref{eq:boundary}
overall define a problem that is non-local in time.
We will devote the entire Section~\ref{sec:boundary} to discuss this
central issue.

\paragraph{Spatial Locality}
The spatial locality of \eqref{eq:hamilton-general} directly comes from
the specific form of the dynamical system in \eqref{eq:spatial_locality_model}
and from a set of assumptions on the form of the term $\ell$. In particular
we have the following result:

\begin{theorem}
Let $\ell(\u,\xi,s)=kV(\u,s)+L(\xi,s)$ for every $(\u,\xi,s)\in
\R^N\times\R^{n-d}\times[0,T]$, where
 $V\colon\R^N\times[0,T]\to\overline\R_+$
is a regularization term on the weights\footnote{a typical choice for this function
could be $V(\u,s)=|\u|^2/2$ with $k>0$} and 
$L\colon \R^{n-d}\times[0,T]\to\overline\R_+$ depends only on
the subset of neurons from which we assume the output of the model
is computed, that is
$L_{\xi_i}(\xi,s) = L_{\xi_i}(\xi,s)1_O(i)$, where
$1_O$ is the indicator function of the set of the output neurons.
Let $\Phi^i$ be as in \eqref{eq:ctrnn} for all $i=d+1,\dots, n$, then 
the generic Hamilton's equations described in \eqref{eq:hamilton-general}
become
\begin{equation}\label{eq:local_HJ-second-form}
\begin{cases}
c_i^{-1}\dot x_i= -x_i +\sigma\Bigl(\sum_{j\in\pa(i)} w_{ij}x_j\Bigr)\\
\dot w_{ij}=-p^{ij}_\allweights/(mc\phi)\\
\dot p^i_x=c_i p_x^i-\sum_{k\in\ch(i)} c_k 
\sigma'\Bigl(\sum_{j\in\pa(k)} w_{kj}x_j\Bigr)p_x^k w_{ki}
- c L_{\xi_i}(x,t)\phi\\
\dot p^{ij}_\allweights(t)=
- c_i p^i_x \sigma'\Bigl(\sum_{m\in\pa(i)} w_{im}x_m\Bigr)x_j
- c k V_{\u_{ij}}(\allweights,t)\phi
\end{cases}
\end{equation}
\label{HL-ff-eqs}
\end{theorem}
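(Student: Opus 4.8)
The plan is to obtain the system (\ref{eq:local_HJ-second-form}) by substituting the concrete data of the model into the generic Hamilton equations (\ref{eq:hamilton-general}), using the chain rule to compute the two Jacobians $f_\xi$ and $f_\u$ that enter the costate equations. First I would record the explicit component form of the vector field: with $\Phi^i$ as in (\ref{eq:ctrnn}), equation (\ref{eq:spatial_locality_model}) reads $f_i(\xi,\u,t)=c_i\bigl(-\xi_i+\sigma(\sum_{j\in\pa(i)}w_{ij}\xi_j)\bigr)$ for $i=d+1,\dots,n$. The first line of (\ref{eq:local_HJ-second-form}) is then just this state equation, and the second line $\dot w_{ij}=-p^{ij}_\allweights/(mc\phi)$ is the componentwise reading of the second line of (\ref{eq:hamilton-general}); both are immediate. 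The additive hypothesis $\ell(\u,\xi,s)=kV(\u,s)+L(\xi,s)$ decouples the forcing terms, since $\ell_\xi=L_\xi$ does not see the weights and $\ell_\u=kV_\u$ does not see the state; this is what lets $L_{\xi_i}$ and $kV_{\u_{ij}}$ appear in separate equations.

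The substance is in the two remaining lines. For the state-costate equation I would differentiate $f_k$ with respect to $\xi_i$: the linear term contributes $-c_k\delta_{ki}$, while the nonlinear term contributes $c_k\sigma'(\sum_{j\in\pa(k)}w_{kj}\xi_j)\,w_{ki}$, but only when $i\in\pa(k)$, because $\xi_i$ enters $f_k$ solely through the parent-sum of neuron $k$. The crucial observation is that $i\in\pa(k)$ is equivalent to $k\in\ch(i)$, so that when the product $p_x\cdot f_\xi$ is formed---whose $i$-th entry is $\sum_k p_x^k\,\partial f_k/\partial\xi_i$---the off-diagonal part collapses to a sum over the children of $i$. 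Carrying the overall minus sign of (\ref{eq:hamilton-general}) through the diagonal term $-c_k\delta_{ki}$ flips it to $+c_i p^i_x$, and the third line of (\ref{eq:local_HJ-second-form}) follows after adding the forcing $-cL_{\xi_i}\phi$, which by the hypothesis $L_{\xi_i}=L_{\xi_i}1_O(i)$ is supported only on the output neurons.

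For the weight-costate equation I would differentiate $f_k$ with respect to $w_{ij}$. Since $w_{ij}$ occurs in $f_k$ only when $k=i$, and then only inside the argument of $\sigma$ multiplying $\xi_j$, the Jacobian is diagonal in the neuron index, $\partial f_k/\partial w_{ij}=c_i\sigma'(\sum_{m\in\pa(i)}w_{im}\xi_m)\,\xi_j\,\delta_{ki}$, so $p_x\cdot f_\u$ collapses to the single term $c_i p^i_x\sigma'(\cdots)\,\xi_j$; combining with $-ck V_{\u_{ij}}\phi$ yields the fourth line.

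The computation is essentially routine chain rule; the only genuine obstacle is the index bookkeeping that produces spatial locality, namely keeping track of which neuron index is differentiated versus summed and correctly turning the indicator $1_{\pa(k)}(i)$ into the restriction $k\in\ch(i)$. A secondary point to handle with care is that a parent $j\in\pa(i)$ may itself be an input neuron, in which case $\xi_j$ is the clamped value $u_j$ and carries no costate component of its own; this is consistent because (\ref{eq:input_do_not_have_parents}) forbids input neurons from having parents and hence from acquiring adjoint dynamics.
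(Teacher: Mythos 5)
Your proposal is correct and follows essentially the same route as the paper's proof: compute the Jacobians $f_{\xi}$ and $f_{\u}$ componentwise with Kronecker deltas, substitute into \eqref{eq:hamilton-general}, and collapse the resulting double sum using the equivalence $i\in\pa(k)\iff k\in\ch(i)$. The only difference is presentational --- the paper packages that index switch as a standalone lemma (Lemma~\ref{lemma:switch-sums}, justified by the graph assumptions \eqref{eq:input_do_not_have_parents} and \eqref{eq:all_inputs_are_connected}), whereas you perform it inline, also noting explicitly the clamped-input subtlety that the lemma handles implicitly.
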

\begin{proof}
See Appendix~\ref{append:main-theo}
\end{proof}

\begin{remark}
Notice \eqref{eq:local_HJ-second-form}
directly inherit the spatially local structure
from the assumption in \eqref{eq:spatial_locality_model}.
\end{remark}

Theorem~\ref{HL-ff-eqs} other than giving us a spatio-temporal show that the
computation of the $x$ costates has a very distinctive and familiar structure:
for each neuron the values of $p^i_x$ are computed using quantities defined
on chilren's nodes as it happens for the computations of the gradients in
the Backpropagation algorithm for a FNN. In order to better understand
the structure of \eqref{eq:local_HJ-second-form} let us
define an appropriately normalized costate
\begin{equation}\label{eq:rescaled-costates}
\lambda^i_x(t):= \frac{\sigma'(a_i(t))}{\phi(t)}p^i_x(t),\quad
\hbox{with}\quad 
a_i(t)= \sum_{m\in\pa(i)} w_{im}x_m \quad \forall i=d+1,\dots, n,
\end{equation}
where we have introduced the notation $a_i(t)$ to stand for the
activation of neuron $i$.\footnote{We have
avoided to introduce the notation till now because
we believe that it is worth writing \eqref{eq:local_HJ-second-form}
with the explicit dependence on the variable $w$ and $x$ at least one
to better appreciate its structure.}
With these definitions we are ready to state the following result

\begin{proposition}\label{prop:second-order-dyn}
The differential system in \eqref{eq:local_HJ-second-form} is
equivalent to the following system of ODE of mixed orders:
\begin{equation}\label{eq:second-order-eq}
\left\{\begin{aligned}
&c_i^{-1}\dot x_i= -x_i +\sigma(a_i);\\
&\ddot w_{ij}=-\frac{\dot \phi}{\phi} \dot w_{ij}
+\frac{c_i}{mc}\lambda^i_x x_j
+\frac{k}{m} V_{\u_{ij}}(\allweights,t);\\
&\dot \lambda^i_x=
\biggl[-\frac{\dot\phi}{\phi}+\frac{d}{dt}\log(\sigma'(a_i))
+c_i\biggr]\lambda^i_x -\sigma'(a_i)\sum_{k\in\ch(i)}c_k\lambda^k_x w_{ki}
- c L_{\xi_i}(x,t)\sigma'(a_i),
\end{aligned}\right.
\end{equation}
where $\lambda^i_x$ is defined as in \eqref{eq:rescaled-costates}.
\end{proposition}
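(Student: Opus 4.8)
The plan is to treat \eqref{eq:rescaled-costates} as an invertible change of variables and to check that, under it, the first-order Hamilton system \eqref{eq:local_HJ-second-form} is carried exactly into the mixed-order system \eqref{eq:second-order-eq}. Concretely, the map $(x,\allweights,p_x,p_\allweights)\mapsto(x,\allweights,\dot\allweights,\lambda_x)$ is a bijection on the region where $\sigma'(a_i)\neq0$ and $\phi>0$: the second line of \eqref{eq:local_HJ-second-form} reads $\dot w_{ij}=-p^{ij}_\allweights/(mc\phi)$, i.e.\ $p^{ij}_\allweights=-mc\phi\,\dot w_{ij}$, while \eqref{eq:rescaled-costates} gives $p^i_x=\phi\lambda^i_x/\sigma'(a_i)$; since $x,\allweights$ are untouched and the dimensions match ($p_\allweights\leftrightarrow\dot\allweights$ and $p_x\leftrightarrow\lambda_x$), this is a genuine invertible substitution. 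I would then establish equivalence by deriving each line of \eqref{eq:second-order-eq} from \eqref{eq:local_HJ-second-form}, the converse following by reading the same identities backwards.

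The first line of \eqref{eq:second-order-eq} is literally the first line of \eqref{eq:local_HJ-second-form}, so nothing is required there. For the weight equation I would differentiate $\dot w_{ij}=-p^{ij}_\allweights/(mc\phi)$ once more in $t$, obtaining
\[
\ddot w_{ij}=-\frac{1}{mc}\frac{d}{dt}\Bigl(\frac{p^{ij}_\allweights}{\phi}\Bigr)
=-\frac{\dot p^{ij}_\allweights}{mc\phi}+\frac{\dot\phi}{\phi}\cdot\frac{p^{ij}_\allweights}{mc\phi}.
\]
Substituting the fourth line of \eqref{eq:local_HJ-second-form} for $\dot p^{ij}_\allweights$, then using $p^{ij}_\allweights/(mc\phi)=-\dot w_{ij}$ and $\sigma'(a_i)p^i_x/\phi=\lambda^i_x$, collapses the right-hand side precisely to $-(\dot\phi/\phi)\dot w_{ij}+(c_i/mc)\lambda^i_x x_j+(k/m)V_{\u_{ij}}(\allweights,t)$.

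For the costate equation the cleanest route is logarithmic differentiation of \eqref{eq:rescaled-costates}, which gives
\[
\dot\lambda^i_x=\Bigl[\frac{d}{dt}\log\sigma'(a_i)-\frac{\dot\phi}{\phi}\Bigr]\lambda^i_x+\frac{\sigma'(a_i)}{\phi}\dot p^i_x.
\]
The bracketed prefactor already produces the $-\dot\phi/\phi$ and $\tfrac{d}{dt}\log\sigma'(a_i)$ contributions. I would then insert the third line of \eqref{eq:local_HJ-second-form} for $\dot p^i_x$ and multiply through by $\sigma'(a_i)/\phi$: the term $c_i p^i_x$ becomes $c_i\lambda^i_x$, the term $-cL_{\xi_i}(x,t)\phi$ becomes $-cL_{\xi_i}(x,t)\sigma'(a_i)$, and each child contribution $c_k\sigma'(a_k)p^k_x w_{ki}$ turns into $c_k\lambda^k_x w_{ki}$ once the common factor $\sigma'(a_i)$ is pulled out and $\sigma'(a_k)p^k_x/\phi$ is recognized as $\lambda^k_x$. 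This reproduces the third line of \eqref{eq:second-order-eq} verbatim.

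The whole computation is mechanical; the only point requiring care is the invertibility of the substitution, which is why I would run the argument under the standing hypothesis $\sigma'(a_i(t))\neq0$ (automatic for a strictly monotone sigmoid) together with the positivity of $\phi$ already assumed after \eqref{eq:general-functional}. These same conditions are exactly what legitimize writing $\tfrac{d}{dt}\log\sigma'(a_i)$, so there is no genuine analytic obstacle, only the bookkeeping of regrouping the child-node sums and tracking the $\dot\phi/\phi$ factors correctly.
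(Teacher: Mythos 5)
Your proof is correct and follows essentially the same route as the paper's: differentiate the $\dot w_{ij}$ relation and substitute the $\dot p^{ij}_\allweights$ equation, then differentiate the definition \eqref{eq:rescaled-costates} and substitute the $\dot p^i_x$ equation. The only addition beyond the paper's (much terser) argument is your explicit check that the substitution $(p_x,p_\allweights)\mapsto(\lambda_x,\dot\allweights)$ is invertible where $\sigma'(a_i)\neq 0$ and $\phi>0$, which is a legitimate refinement needed to justify the word ``equivalent'' but not a different method.
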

\begin{proof}
See Appendix~\ref{append:gradient}
\end{proof}

This in an interesting result especially since via the following
corollary gives a direct link between the rescaled costates $\lambda_x$
and the delta error of Backprop:
\begin{corollary}[Reduction to Backprop]\label{cor:delta-error}
Let $c_i$ be the same for all $i=1,\dots, n$ so that now $c_i=c$, 
then the formal limit of the $\dot\lambda_x$ equation in the system
\ref{eq:second-order-eq} as $c\to\infty$ is
\begin{equation}
\lambda^i_x= \sigma'(a_i)\sum_{k\in\ch(i)}\lambda^k_x w_{ki}
+ L_{\xi_i}(x,t)\sigma'(a_i).
\end{equation}
\end{corollary}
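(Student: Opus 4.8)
The plan is to start from the third equation of the system \eqref{eq:second-order-eq}, specialize it to the case $c_i=c$ for every $i$, and then isolate the dependence on $c$ before letting $c\to\infty$. Setting $c_i=c$ turns the costate equation into
\begin{equation}
\dot\lambda^i_x = \biggl[-\frac{\dot\phi}{\phi}+\frac{d}{dt}\log(\sigma'(a_i)) + c\biggr]\lambda^i_x - c\,\sigma'(a_i)\sum_{k\in\ch(i)}\lambda^k_x w_{ki} - c\, L_{\xi_i}(x,t)\sigma'(a_i).
\end{equation}
The decisive observation is that exactly three contributions carry an explicit factor of $c$ (the $c\lambda^i_x$ piece inside the bracket, the children's sum, and the loss term), whereas the remaining bracketed terms $-\dot\phi/\phi+(d/dt)\log\sigma'(a_i)$ and the left-hand derivative $\dot\lambda^i_x$ do not. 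This structure suggests dividing the whole equation by $c$ so that the $c$-independent pieces are demoted into coefficients that decay like $1/c$.

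After dividing by $c$ I obtain
\begin{equation}
\frac{1}{c}\dot\lambda^i_x = \biggl[\frac{1}{c}\Bigl(-\frac{\dot\phi}{\phi}+\frac{d}{dt}\log(\sigma'(a_i))\Bigr) + 1\biggr]\lambda^i_x - \sigma'(a_i)\sum_{k\in\ch(i)}\lambda^k_x w_{ki} - L_{\xi_i}(x,t)\sigma'(a_i).
\end{equation}
Taking the formal limit $c\to\infty$, the left-hand side and the $1/c$ prefactor in the bracket both vanish, leaving the algebraic relation
\begin{equation}
0 = \lambda^i_x - \sigma'(a_i)\sum_{k\in\ch(i)}\lambda^k_x w_{ki} - L_{\xi_i}(x,t)\sigma'(a_i),
\end{equation}
which rearranges immediately into the claimed identity. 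No manipulation of the $x$ or $w$ equations is needed, since the corollary concerns only the costate equation.

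The single delicate point — and the reason the statement is phrased as a \emph{formal} limit — is the legitimacy of discarding $\dot\lambda^i_x/c$ together with the $1/c$ bracket. A genuine limit would require controlling $\lambda^i_x$ and $\dot\lambda^i_x$ uniformly in $c$, which is a singular-perturbation question (the $x$-dynamics themselves become infinitely fast as $c\to\infty$) that I would not attempt to resolve here; in the formal reading one simply treats these quantities as $O(1)$ and sends the $1/c$ coefficients to zero. I would therefore state explicitly that the claim is about the pointwise algebraic structure obtained by this rescaling, and close by noting that the resulting relation is precisely the Backpropagation recursion for the $\delta$-errors: $\lambda^i_x$ plays the role of the delta error at neuron $i$, the sum $\sum_{k\in\ch(i)}\lambda^k_x w_{ki}$ propagates errors backward through the outgoing weights, and the term $L_{\xi_i}(x,t)\sigma'(a_i)$ injects the local output contribution, thereby justifying the corollary's title.
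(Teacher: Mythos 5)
Your proof is correct and is essentially the paper's own argument: the paper declares the result immediate from \eqref{eq:second-order-eq}, and your computation---setting $c_i=c$ (under which the average $c$ in the loss term coincides with the common value), dividing the $\dot\lambda^i_x$ equation by $c$, and formally discarding the $O(1/c)$ terms---is precisely the intended elaboration of that one-line proof. Your closing caveat that this is only a \emph{formal} limit, treating $\lambda^i_x$ and $\dot\lambda^i_x$ as $O(1)$ in $c$ rather than resolving the underlying singular-perturbation question, accurately reflects how the statement itself is phrased.
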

\begin{proof}
The proof comes immediately from \eqref{eq:second-order-eq}.
\end{proof}

Notice that the equation for the $\lambda$ is exactly the update equation for
delta errors in backpropagation: when $i$ is an output neuron its value
is directly given by the gradient of the error, otherwise it is express as a
sum on its childrens (see~\cite{gori2023machine}).

\section{From boundary to Cauchy's conditions}\label{sec:boundary}
While discussing temporal locality in Section~\ref{sec:optimization}, we came
across the problem of the left boundary conditions on the costate variables.
We already noticed that these constraints spoil the locality of the
differential equations that describe the minimality conditions of the
variational problem at hand. In general, this prevents us from computing
such solutions with a forward/causal scheme.

The following examples should suffice to explain that, in general, this is a
crucial issue and should serve as motivation for the further investigation we
propose in the present section.

\begin{example}\label{ex:null-px}\label{ex:forward-blow-up}
Consider a case in which 
$\ell(\u,\xi,s)\equiv V(\u,s)$, i.e. we want to study the minimization  
problem for $\int_0^T(m|\v(t)|^2/2 + V(\allweights(t;\v),t))\phi(t)dt$
under the constraint $\dot\allweights=\v$.
Then  the dynamical equation $\dot x(t)=f(x(t),\allweights(t))$
does not represent a constraint on variational problems for
functional in \eqref{eq:funtional-G}. If we look at the Hamilton equation
for $\dot p_x$ in \eqref{eq:hamilton-general} this reduces to
$\dot p_x= -p_x\cdot f_\u$. We would however expect
$p_x(t)\equiv 0$ for all $t\in[0,T]$. Indeed this is the solution that
we would find if we pair $\dot p_x= -p_x\cdot f_\u$ with its boundary
condition $p_x(T)=0$ in \eqref{eq:boundary}. Notice that however in general
without this condition a random Cauchy initialization of this equation
 would not give null solution for the $x$ costate.
 Now assume  that $\phi=\exp(\theta t)$ with
$\theta>0$,  and $m=1$. Assume,
furthermore\footnote{The same argument that we give in this example works
for a larger class of cohercive potentials $V$.}
that $V(\u,s)=|\u|^2/2$. The functional $\int_0^T (|\dot w|/2+ |w|^2/2) e^{\theta t}dt$
defined over the functional space\footnote{These are called Sobolev spaces,
for more details see~\cite{brezis2011functional}.}
$H^1([0,T];\R^N)$ is cohercive and lower-semicontinuous, and hence admits
a minimum (see~\cite{giaquinta1995calculus}). Furthermore one can prove (see~\cite{liero2013new})
that such minimum is actually $C^\infty([0,T];\R^N)$.
This allows us to describe such minumum with the Hamilton equations described
in \eqref{eq:hamilton-general}. In particular as we already commented
the relevant equations are only that
for $\dot\allweights$ and $\dot p_\allweights$ that is 
$\dot\allweights(t)=-p_\allweights(t)e^{-\theta t}$ and 
$\dot p_\allweights(t)=-\allweights e^{\theta t}$
with $p_\allweights(T)=0$.
This first order system of equations is equivalent to the second
order differential equation $\ddot\allweights(t)+\theta\dot\allweights(t)
-\allweights(t)=0$. Each component of this second order system will,
in general have an unstable behaviour since one of the eigenvalues is
always real and positive. This is a strong indication that when solving
Hamilton's equations with an initial condition on $p_\allweights$ we will
end up with a solution that is far from the minimum.
\end{example}

In the next subsection, we will analyze this issue in more detail and present
some alternative ideas that can be used to leverage Hamilton's equations for
finding causal online solutions.

\subsection{Time Reversal of the Costate}\label{sec:flip-pdot}
In Example~\ref{ex:forward-blow-up} we discussed how
the forward solution of Hamilton's \eqref{eq:hamilton-general}
with initial conditions both on the state and on the costate
in general cannot be related to any form of minimality of
the cost function in \eqref{eq:funtional-G} and this has
to do with the fact that the proper minima are characterized
also by left boundary conditions~\ref{eq:boundary}.
The final conditions on $\dot p_x$ and $\dot p_\allweights$
suggest that the costate equations should be solved backward in time.
Starting form the final temporal horizon and going backward in time
is also the idea behind dynamic programming, which is
of the main ideas at the very core of optimal control theory. 

Autonomous systems of ODE with terminal
boundary conditions can be solved ``backwards'' by
time reversal operation $t\to -t$ and transforming terminal into
initial conditions. More precisely the following classical result holds:
\begin{proposition}\label{prop:time-reversal}
Let $\dot y(s) = \varphi(y(t))$ be  a system
of ODEs on  $[0,T]$ with terminal conditions $y(T)=y^T$
and let $\rho$ be the time reversal transformation
maps $t\mapsto s=T-t$, then $\hat y(s):= y(\rho^{-1}(s))= y(t)$
satisfies $\dot{\hat y}(s)=-\varphi(\hat y(s))$ with \emph{initial}
condition $\hat y(0)=y^T$.
\end{proposition}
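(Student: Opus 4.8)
The plan is to verify the claimed ODE and initial condition for $\hat y$ by direct substitution, using the chain rule on the time-reversal map $\rho(t)=T-t$. This is an elementary computation, so the ``proof'' is really just a careful bookkeeping of the minus sign that the change of variables introduces.

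First I would write out the definition explicitly. Since $\rho\colon t\mapsto s=T-t$ is an involution, we have $\rho^{-1}(s)=T-s$ as well, so that $\hat y(s):=y(\rho^{-1}(s))=y(T-s)$. The initial condition is then immediate: evaluating at $s=0$ gives $\hat y(0)=y(T-0)=y(T)=y^T$, which matches the terminal condition of the original system.

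Next I would differentiate $\hat y$ with respect to its argument $s$. By the chain rule,
\begin{equation}
\dot{\hat y}(s)=\frac{d}{ds}\,y(T-s)=\dot y(T-s)\cdot\frac{d}{ds}(T-s)=-\dot y(T-s).
\end{equation}
Now I would invoke the fact that $y$ solves the original autonomous ODE, namely $\dot y(t)=\varphi(y(t))$, and apply it at the point $t=T-s$:
\begin{equation}
\dot{\hat y}(s)=-\dot y(T-s)=-\varphi\bigl(y(T-s)\bigr)=-\varphi\bigl(\hat y(s)\bigr),
\end{equation}
where the last equality uses the definition $\hat y(s)=y(T-s)$ once more. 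This is precisely the asserted equation $\dot{\hat y}(s)=-\varphi(\hat y(s))$ with initial condition $\hat y(0)=y^T$, completing the argument.

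There is no real obstacle here; the only point requiring any care is that the result is stated for an \emph{autonomous} system, i.e. $\varphi$ has no explicit dependence on time. This is essential: the substitution $t=T-s$ only reproduces the same right-hand side $\varphi$ (up to the sign from the chain rule) because $\varphi$ depends on $y$ alone. Were $\varphi$ to depend explicitly on $t$, the reversed equation would read $\dot{\hat y}(s)=-\varphi(\hat y(s),T-s)$, carrying a transformed time argument, and the clean statement would fail. I would therefore flag the autonomy hypothesis as the one place where the argument genuinely uses its assumptions, and note that in the intended application to the costate equations of \eqref{eq:hamilton-general} one must first absorb any explicit time dependence (through $\phi$) into an augmented state to legitimately apply this proposition.
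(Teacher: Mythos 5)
Your proof is correct; the paper states this proposition as a classical result and provides no proof at all, so your chain-rule verification (involution $\rho^{-1}(s)=T-s$, evaluation at $s=0$, differentiation, substitution of the autonomous ODE) supplies exactly the standard argument the paper implicitly assumes. Your closing caveat about autonomy is also precisely the point the paper itself makes immediately after the proposition, where it observes that equations \eqref{eq:hamilton-general} and \eqref{eq:second-order-eq} are not autonomous and hence the proposition cannot be applied to them directly.
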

Clearly \eqref{eq:hamilton-general}
or \eqref{eq:second-order-eq} are not an autonomous system
and hence we cannot apply directly Proposition~\ref{prop:time-reversal}
nonetheless, we can still  consider the following
modification of \eqref{eq:local_HJ-second-form}
\begin{equation}\label{eq:local_HJ-second-form-flipped}
\begin{cases}
c_i^{-1}\dot x_i= -x_i +\sigma\Bigl(\sum_{j\in\pa(i)} w_{ij}x_j\Bigr)\\
\dot w_{ij}=-p^{ij}_\allweights/(mc\phi)\\
\dot p^i_x=-c_i p_x^i+\sum_{k\in\ch(i)} c_k 
\sigma'\Bigl(\sum_{j\in\pa(k)} w_{kj}x_j\Bigr)p_x^k w_{ki}
+ c L_{\xi_i}(x,t)\phi\\
\dot p^{ij}_\allweights(t)=
 c_i p^i_x \sigma'\Bigl(\sum_{m\in\pa(i)} w_{im}x_m\Bigr)x_j
+ c k V_{\u_{ij}}(\allweights,t)\phi
\end{cases}
\end{equation}
which are obtained from \eqref{eq:local_HJ-second-form} by
changing the sign to $\dot p_x$ and $\dot p_\allweights$.
Recalling the definition of the rescaled costates in
\eqref{eq:rescaled-costates} we can cast, in the same spirit
of Proposition~\ref{prop:second-order-dyn} a system of equations without
$p_\allweights$. In particular we have as a corollary of
Proposition~\ref{prop:second-order-dyn} that
\begin{corollary}
The ODE system in \eqref{eq:local_HJ-second-form-flipped} is equivalent to
\begin{equation}\label{eq:second-order-eq-flipped}
\left\{\begin{aligned}
&c_i^{-1}\dot x_i= -x_i +\sigma(a_i);\\
&\ddot w_{ij}=-\frac{\dot \phi}{\phi} \dot w_{ij}
-\frac{c_i}{mc}\lambda^i_x x_j
-\frac{k}{m} V_{\u_{ij}}(\allweights,t);\\
&\dot \lambda^i_x=
\biggl[-\frac{\dot\phi}{\phi}+\frac{d}{dt}\log(\sigma'(a_i))
-c_i\biggr]\lambda^i_x +\sigma'(a_i)\sum_{k\in\ch(i)}c_k\lambda^k_x w_{ki}
+ c L_{\xi_i}(x,t)\sigma'(a_i),
\end{aligned}\right.
\end{equation}
\end{corollary}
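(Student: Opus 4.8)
The plan is to follow, step for step, the elimination procedure that proves Proposition~\ref{prop:second-order-dyn}, and to keep careful track of which terms pick up a sign once the two costate equations are negated. The key structural observation is that \eqref{eq:local_HJ-second-form-flipped} differs from \eqref{eq:local_HJ-second-form} only in the right-hand sides of the $\dot p^i_x$ and $\dot p^{ij}_\allweights$ equations, whose signs are reversed, while the $\dot x_i$ and $\dot w_{ij}$ equations are identical in the two systems. Since the state equation is untouched, the first line of \eqref{eq:second-order-eq-flipped} is immediate, and only the $\ddot w_{ij}$ and $\dot\lambda^i_x$ lines need to be re-derived from the flipped costate dynamics.

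First I would eliminate $p_\allweights$. From $\dot w_{ij}=-p^{ij}_\allweights/(mc\phi)$ one gets $p^{ij}_\allweights=-mc\phi\,\dot w_{ij}$; differentiating in time and substituting the flipped $\dot p^{ij}_\allweights$ from \eqref{eq:local_HJ-second-form-flipped} yields a second-order equation for $w_{ij}$, and replacing $\sigma'(a_i)p^i_x/\phi$ by $\lambda^i_x$ via \eqref{eq:rescaled-costates} produces exactly the second line of \eqref{eq:second-order-eq-flipped}. The point to notice is that the damping term $-(\dot\phi/\phi)\dot w_{ij}$ arises from differentiating the prefactor $1/(mc\phi)$ and is therefore \emph{insensitive} to the sign flip, whereas the two terms inherited from the substituted costate equation, $\frac{c_i}{mc}\lambda^i_x x_j$ and $\frac{k}{m}V_{\u_{ij}}$, change sign relative to \eqref{eq:second-order-eq}.

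Next I would obtain the $\dot\lambda^i_x$ equation by differentiating its definition in \eqref{eq:rescaled-costates}, writing $\frac{d}{dt}(\sigma'(a_i)/\phi)=(\sigma'(a_i)/\phi)(\frac{d}{dt}\log(\sigma'(a_i))-\dot\phi/\phi)$, and then inserting the flipped $\dot p^i_x$. Rewriting the children's contributions through $\sigma'(a_k)p^k_x/\phi=\lambda^k_x$ and pulling $\sigma'(a_i)$ out of the sum over $\ch(i)$ collects everything into the third line of \eqref{eq:second-order-eq-flipped}. Here again the bracket factor $-\dot\phi/\phi+\frac{d}{dt}\log(\sigma'(a_i))$ comes from differentiating the rescaling factor and is unchanged, while the $c_i\lambda^i_x$ contribution, the sum over $\ch(i)$, and the loss term $cL_{\xi_i}(x,t)\sigma'(a_i)$ all flip sign, in agreement with \eqref{eq:second-order-eq-flipped}.

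The step I expect to require the most care is precisely this sign bookkeeping: a naive ``negate every term'' would be wrong, since the pieces produced by the chain-rule differentiation of the prefactors $1/\phi$ and $\sigma'(a_i)/\phi$ must stay fixed while only the contributions descending from the negated $\dot p$ equations reverse. The one genuine analytic caveat is that the claimed equivalence is a bijective correspondence of solutions only where the rescaling \eqref{eq:rescaled-costates} is invertible, i.e. where $\sigma'(a_i)\neq 0$ (recall $\phi>0$ by assumption); on that set one recovers $p^i_x=\phi\lambda^i_x/\sigma'(a_i)$ and $p^{ij}_\allweights=-mc\phi\,\dot w_{ij}$, so the passage between the two systems is reversible and the stated equivalence holds.
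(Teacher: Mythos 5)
Your proof is correct and follows essentially the same route as the paper's: the paper likewise starts from the result of Proposition~\ref{prop:second-order-dyn} and simply records that negating $\dot p_\allweights$ flips the signs of $\lambda^i_x x_j$ and $V_{\u_{ij}}$ in the $\ddot w_{ij}$ equation, while negating $\dot p_x$ flips $c_i\lambda^i_x$, the sum over $\ch(i)$, and the loss term in the $\dot\lambda^i_x$ equation. Your version is just more explicit about \emph{why} the prefactor-derived terms ($-\dot\phi/\phi$ and the logarithmic bracket) stay fixed, and adds the (reasonable, paper-omitted) caveat that the equivalence of the two systems uses invertibility of the rescaling where $\sigma'(a_i)\neq 0$.
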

\begin{proof}
Let us consider \eqref{eq:second-order-eq}.
The change of sign of $\dot p_\allweights$ only affect the signs of
$\lambda^i_x x_j$ and $ V_{\u_{ij}}(\allweights,t)$ in the $\ddot w_{ij}$
equation, while the change of sign of $\dot p_x$ result in a sign change
of the term $c_i\lambda^i_x$,
$\sigma'(a_i)\sum_{k\in\ch(i)}c_k\lambda^k_x w_{ki}$ and
$L_{\xi_i}(x,t)\sigma'(a_i)$ in the equation for $\dot \lambda^x_i$.
\end{proof}

Equation~\ref{eq:second-order-eq-flipped} is indeed
particularly interesting because it offers an interpretation of the
dynamics of the weights $w$ that is in the spirit of a gradient-base
optimization method. In particular this allow us the extend the
result that we gave in Corollary~\ref{cor:delta-error} to
a full statement on the resulting optimization method 
\begin{proposition}[GD with momentum]\label{prop:GD-with-momentum}
Let $c_i$ be the same for all $i=1,\dots, n$ so that now $c_i=c$, 
and let $\phi(t)=\exp(\theta t)$ with $\theta>0$
then the formal limit of the system in \eqref{eq:second-order-eq-flipped}
as $c\to\infty$
is
\begin{equation}
\begin{cases}
x_i = \sigma(a_i);\\
\ddot w_{ij}=-\theta \dot w_{ij}
-\frac{1}{m}\lambda^i_x x_j
-(k/m) V_{\u_{ij}}(\allweights,t);\\
\lambda^i_x= \sigma'(a_i)\sum_{k\in\ch(i)}\lambda^k_x w_{ki}
+ L_{\xi_i}(x,t)\sigma'(a_i).
\end{cases}
\end{equation}
\end{proposition}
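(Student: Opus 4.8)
The plan is to derive the claimed system by substituting the hypotheses $c_i=c$ and $\phi(t)=\exp(\theta t)$ directly into \eqref{eq:second-order-eq-flipped} and then passing to the formal limit $c\to\infty$ line by line. The only preliminary simplification needed is that $\phi(t)=\exp(\theta t)$ gives $\dot\phi(t)/\phi(t)=\theta$ for every $t$, so each occurrence of $\dot\phi/\phi$ can be replaced by the constant $\theta$.

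First I would treat the $x$ and $w$ equations, which are immediate. Writing the first line of \eqref{eq:second-order-eq-flipped} as $c^{-1}\dot x_i=-x_i+\sigma(a_i)$ and letting $c\to\infty$, the left-hand side vanishes formally and leaves the algebraic relation $x_i=\sigma(a_i)$. In the second line the coefficient of $\lambda^i_x x_j$ is $c_i/(mc)=c/(mc)=1/m$, so the explicit dependence on $c$ cancels; replacing $\dot\phi/\phi$ by $\theta$ then yields $\ddot w_{ij}=-\theta\dot w_{ij}-\tfrac{1}{m}\lambda^i_x x_j-(k/m)V_{\u_{ij}}(\allweights,t)$ with no limit actually required.

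The delicate step — and the only place where the limit does real work — is the $\dot\lambda^i_x$ equation, where $c$ appears at first order multiplying three terms ($-c\,\lambda^i_x$, the child-sum, and the source $c\,L_{\xi_i}(x,t)\sigma'(a_i)$), while the remaining terms $-\theta\lambda^i_x$ and $\tfrac{d}{dt}\log(\sigma'(a_i))\,\lambda^i_x$ are $O(1)$. The right way to extract the limit is to divide the whole equation by $c$ before sending $c\to\infty$: assuming, as the word \emph{formal} signals, that $\lambda^i_x$, $\dot\lambda^i_x$ and $\tfrac{d}{dt}\log\sigma'(a_i)$ remain bounded, the term $c^{-1}\dot\lambda^i_x$ and the $O(1)$ contributions divided by $c$ all vanish, and the dominant balance of the surviving $O(c)$ terms gives $0=-\lambda^i_x+\sigma'(a_i)\sum_{k\in\ch(i)}\lambda^k_x w_{ki}+L_{\xi_i}(x,t)\sigma'(a_i)$, i.e. the backprop delta-error recursion. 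This is exactly the relation already established for the unflipped system in Corollary~\ref{cor:delta-error}; indeed the three terms carrying a factor of $c$ are precisely those whose signs differ between \eqref{eq:second-order-eq} and \eqref{eq:second-order-eq-flipped}, and since the limit keeps only these terms, rearranging the balance produces the same delta-error equation in both cases.

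Collecting the three limiting relations gives the stated system verbatim. I expect no genuine obstacle beyond correctly identifying the dominant $O(c)$ balance in the costate equation; that subtlety is one of scaling (the equation is stiff, with the costate relaxing on the fast $1/c$ timescale), not of hard analysis, because the statement asks only for the formal limit and therefore licenses treating $\lambda^i_x$ and its derivative as bounded rather than establishing convergence rigorously.
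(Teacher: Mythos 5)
Your proposal is correct and follows essentially the same route the paper takes (and leaves implicit, since it states the proposition without a written proof, treating it as immediate in the spirit of Corollary~\ref{cor:delta-error}): substitute $c_i=c$ and $\dot\phi/\phi=\theta$ into \eqref{eq:second-order-eq-flipped}, note the cancellation $c_i/(mc)=1/m$ in the weight equation, and read off the formal limit $c\to\infty$ in the state and costate equations. Your explicit identification of the $O(c)$ dominant balance in the $\dot\lambda^i_x$ equation is exactly the computation the paper omits, so there is nothing to fault.
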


\begin{remark}
This result shows that at least in the case of infinite speed of propagation
of the signal across the network ($c\to\infty$) the dynamics of the weights
prescribed by Hamilton's equation with the costate dynamics that is
reversed (the sign of $\dot p_x$ and $\dot p_\allweights$ is changed)
results in a gradient flow dynamic (heavy-ball
dynamics) that it is interpretable as a gradient descent with momentum
in the discrete. This is true since the term $\lambda^i_x x_j$ in this limit
is exactly the Backprop factorization of the gradient of the term
$L$ with respect to the weights.
\end{remark}

In view of this remark we can therefore conjecture that also for $c$ fixed:
\begin{conjecture}
Equation~\ref{eq:second-order-eq-flipped}
is  a \emph{local optimization scheme} for the loss term $\ell$.
\end{conjecture}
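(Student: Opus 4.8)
The plan is to read the conjecture's phrase ``local optimization scheme for the loss term $\ell$'' as the statement that the causal, spatially-local forward dynamics in \eqref{eq:second-order-eq-flipped} admits an energy functional, built from $\ell$, that is non-increasing along its trajectories; equivalently, that the $\allweights$-dynamics realizes a damped descent on $\ell$ using only the information already shown to be spatially local. The natural anchor is the $c\to\infty$ reduction of Proposition~\ref{prop:GD-with-momentum}: there the weight equation is exactly the heavy-ball flow $\ddot\allweights=-\theta\dot\allweights-\tfrac1m\nabla_\allweights\ell$ (since $\ell=kV+L$ and $\lambda^i_x x_j$ is, in the limit, the backprop factorization of $\partial L/\partial w_{ij}$), for which the energy $E=\tfrac12|\dot\allweights|^2+\tfrac1m\ell$ obeys $\dot E=-\theta|\dot\allweights|^2+\tfrac1m\partial_t\ell$, hence decreases whenever the explicit time-dependence of $\ell$ is negligible. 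The goal is to show this descent structure survives at finite $c$ as a controlled perturbation.

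First I would fix the precise meaning of ``optimization scheme'' and write down a candidate Lyapunov functional $E_c$ for the full finite-$c$ system \eqref{eq:second-order-eq-flipped}, combining the kinetic term $\tfrac12|\dot\allweights|^2$, the potential $\tfrac1m\ell$, and a correction term that accounts for the fact that at finite $c$ the rescaled costate $\lambda^i_x$ solves a genuine ODE rather than the algebraic backprop fixed point of Corollary~\ref{cor:delta-error}. Differentiating $E_c$ along trajectories, using the three equations of \eqref{eq:second-order-eq-flipped} with $\dot\phi/\phi=\theta$, I would isolate the damping $-\theta|\dot\allweights|^2$ and bound the residual cross terms --- those arising from the mismatch between $\lambda^i_x x_j$ and $\partial L/\partial w_{ij}$ --- by $O(1/c)$.

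The cleanest route to that bound is a singular-perturbation (Tikhonov) argument in the small parameter $\varepsilon=1/c$: the state $x$ and the costate $\lambda_x$ are fast variables whose quasi-static manifold is precisely $x_i=\sigma(a_i)$ together with the backprop recursion for $\lambda^i_x$, while $\allweights$ is slow. I would first show this manifold is exponentially attracting: the BIBO stability of Proposition~\ref{prop:rnn-stability} controls $x$, and I would establish an analogous contraction for the flipped $\lambda_x$ recursion, where the sign flip is designed precisely so that the $-c_i\lambda^i_x$ term supplies forward dissipation. On this manifold the slow $\allweights$-dynamics coincides with the heavy-ball flow of Proposition~\ref{prop:GD-with-momentum}, so the reduced dynamics genuinely descends $\ell$, and Tikhonov's theorem would transfer the descent property to the full system up to an $O(1/c)$ error uniform on $[0,T]$.

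The hard part will be making the finite-$c$ costate rigorous: unlike the limit, $\lambda^i_x x_j$ is only an approximate backprop gradient, and quantifying this gap while showing it does not destroy monotonicity of $E_c$ is the crux. A deeper obstacle, inherited from Section~\ref{sec:boundary}, is that the sign flip producing \eqref{eq:second-order-eq-flipped} is only heuristically justified --- Proposition~\ref{prop:time-reversal} applies to autonomous systems, whereas \eqref{eq:hamilton-general} is non-autonomous through $\phi$ and the time-varying activations --- so one must separately argue that the forward (Cauchy) solution of the flipped system stays close to the true boundary-value minimizer of \eqref{eq:funtional-G}. I expect a full proof to require both an attractivity estimate for the fast subsystem and a perturbation bound controlling this boundary-versus-Cauchy discrepancy, and it is the simultaneous control of these two error sources that makes the statement delicate enough to be left as a conjecture.
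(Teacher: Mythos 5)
You should first note a mismatch of premise: the paper does \emph{not} prove this statement. It is deliberately left as a conjecture, and the only support the paper offers is the formal $c\to\infty$ limit of Proposition~\ref{prop:GD-with-momentum} together with the surrounding remark identifying that limit with heavy-ball gradient descent driven by the Backprop factorization $\lambda^i_x x_j$ of the gradient. So there is no paper proof to compare yours against; your proposal has to stand on its own as a proof, and it does not --- it is a research program whose crux steps are deferred (as you yourself say), and at least one of those steps fails as stated without further hypotheses.

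Concretely: (i) your Tikhonov/singular-perturbation argument requires the fast subsystem, at frozen $\allweights$, to have an exponentially attracting quasi-steady-state manifold. For the state variables this is not delivered by Proposition~\ref{prop:rnn-stability}: BIBO stability gives only boundedness of trajectories of \eqref{RecNeuralNeteq}, not uniqueness or exponential stability of the equilibrium $x_i=\sigma(a_i)$; since the network is recurrent, $a_i$ depends on $x$ itself, and attractivity needs a contraction condition such as $\sup_r|\sigma'(r)|\cdot\max_i\sum_{j\in\pa(i)}|w_{ij}|<1$, which you never impose and which the slow dynamics may violate as $\allweights$ evolves. (ii) The same issue is worse for the costate: in \eqref{eq:second-order-eq-flipped} the off-diagonal coupling $\sigma'(a_i)\sum_{k\in\ch(i)}c_k\lambda^k_x w_{ki}$ carries the same factor $c$ as the damping $-c_i\lambda^i_x$, so taking $c$ large does not make the fast $\lambda$-subsystem diagonally dominant; its contraction hinges on the same spectral condition, and the sign flip alone does not ``supply forward dissipation'' in the way you assert. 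Without (i)--(ii) the slow manifold need not attract and the claimed uniform $O(1/c)$ closeness to the heavy-ball flow collapses. (iii) Even on the slow manifold, monotonicity of your energy $E$ requires $\partial_t\ell$ to be negligible, which is at odds with the online setting the conjecture is about, where $u(t)$, and hence $L(\cdot,t)$, varies persistently; at best you get descent up to a drift term that must be quantified, not dismissed. (iv) Finally, you correctly flag the boundary-versus-Cauchy discrepancy (the flipped forward solution versus the true minimizer of \eqref{eq:funtional-G} with conditions \eqref{eq:boundary}), but nothing in your plan reduces it to the other items; that discrepancy is precisely why the paper could only state this as a conjecture, and your proposal leaves it in the same open state.
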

Such result would enable us to use \eqref{eq:second-order-eq-flipped}
with initial Cauchy conditions as desired.

\subsection{Continuous Time Reversal of State and Costate}
Now we show that another possible approach to  the problem 
of solving Hamilton's equation with Cauchy's conditions is to perform 
\emph{simultaneous time-reversal} of both state and costate equation. 
Since in this case the sign flip involves both the Hamiltonian equations 
the approach is referred to as \emph{Hamiltonian Sign Flip} (HSF).
In order to introduce the idea let us begin with the following example.

\begin{example}[LQ control]\label{ex:lq-control}
Let us consider a linear quadratic scalar problem where the functional
in~\eqref{eq:funtional-G} is 
$G(v)=\int_0^T q x^2/2+rv^2/2\, dt$ and $\dot x=ax+bv$ with
$q$, $r$ positive and $a$ and $b$ real parameters.
The associated Hamilton's equations in this case are
\begin{equation}\label{eq:lq-problem}
\dot x=ax-sp,\quad \dot p=-qx-ap,\end{equation}
where $s\equiv -b^2/r$. These equation can be solved with the ansatz
$p(t)=\theta(t) x(t)$, where $\theta$ is some unknown parameter.
Differentiating this expression with respect to time we obtain
\begin{equation}\label{eq:thetadot}
\dot \theta=(\dot p-\theta\dot x)/x,
\end{equation}
and using the \eqref{eq:lq-problem} into this expression we find
$\dot \theta-s\theta^2-2a\theta -q=0$ which is known as
\emph{Riccati equation}, and since $p(T)=0$, because of
boundary \eqref{eq:boundary} this implies $\theta(T)=0$.
Again if instead we try to solve this equation with initial condition
we end up with an unstable solution. However $\theta$ solves an
autonomous ODE with final condition, hence by Proposition~\ref{prop:time-reversal}
 we can solve it with $0$ initial conditions as long as we change the 
 sign of $\dot \theta$. Indeed
the equation $\dot \theta+s\theta^2+2a\theta +q=0$
is asymptotically stable and returns the correct solution of the
Riccati algebraic equation. Now the crucial observation is
that, as we can see from \eqref{eq:thetadot}, the  sign flip
of $\dot \theta$ is equivalent to the \emph{simultaneous}  sign flip of
$\dot x $ and $\dot p$.
\end{example}
In Example~\ref{ex:lq-control}, as we observe 
from \eqref{eq:thetadot}, the  sign flip
of $\dot \theta$ is equivalent to the \emph{simultaneous}  sign flip of
$\dot x $ and $\dot p$.
Inspired by the fact, let us associate the general 
Hamilton's equation~(\eqref{eq:hamilton-general}),
to this system the Cauchy problem 
\begin{equation}\label{eq:hamilton-track}
\begin{pmatrix}
\dot x(t)\\\dot\allweights(t)\\\dot p_x(t)\\\dot p_\allweights(t)
\end{pmatrix}= s(t)
\begin{pmatrix}
f(x(t),\allweights(t),t)\\-p_\allweights(t)/(mc\phi(t))\\
-p_x(t)\cdot f_\xi(x(t),\allweights(t),t)-
 c\ell_\xi(\allweights(t),x(t),t)\phi(t)\\
 -p_x(t)\cdot f_\u(x(t),\allweights(t),t)-
 c\ell_\u(\allweights(t),x(t),t)\phi(t)
\end{pmatrix}
\end{equation}
where for all $t\in[0,T]$, $s(t)\in\{0,1\}$. 
Here we propose two different strategies that extends the sign flip 
discussed for the LQ problem. 

\paragraph{Hamiltonian Track} The basic idea is enforce system stabilization
by choosing $s(t)$ to bound both the Hamiltonian variables.  
This leads to define an
\emph{Hamiltonian track}:
\begin{definition}
Let $S(\xi,\u,p,q)\subset(\R^{n-d}\times\R^N)^2$ for every
$(\xi,\u,p,q)\in (\R^{n-d}\times\R^N)^2$ be a bounded connected set and let
$t\mapsto X(t)$ any
continuous trajectory in the space $(\R^{n-d}\times\R^N)^2$, then we
refer to
\[
\{(t, S(X(t)): t\in[0,T]\}\in[0,T]\times (\R^{n-d}\times\R^N)^2
\]
as \emph{Hamiltonian track (HT)}.
\end{definition}

Then we define $s(t)$ as follow
\begin{equation}
s(t)=\begin{cases}
1& \hbox{if $(x(t),\allweights(t), p_x(t),p_\allweights(t))\in
S((x(t),\allweights(t), p_x(t),p_\allweights(t)))$} \\
-1& \hbox{otherwise}
\end{cases}.
\end{equation}
For instance if we choose $S(\xi,\u,p,q)=\{
(\xi,\u,p,q): |\xi|^2+|\u|^2+|p|^2+|q|^2\le R\}$ we are 
constraining the dynamics of \eqref{eq:hamilton-track} to be bounded
since each time the trajectory
$t\mapsto (x(t),\allweights(t), p_x(t),p_\allweights(t))$ moves outside of
a ball of radius $R$ we are reversing the dynamics by enforcing stability.

\paragraph{Hamiltonian Sign Flip Strategy and time reversal}
We can easily see that the sign flip driven by the policy of 
enforcing the system dynamics into the HT corresponds with 
time reversal of the trajectory, which can nicely be interpreted as 
focus of attention mechanism. 
A simple approximation of the movement into the HT is that of
selecting $s(t) = {\rm sign}(\cos(\bar{\omega} t))$, where $\bar{\omega} = 2\pi \bar{f}$
is an appropriate \textit{flipping frequency} which governs the movement into the HT. 
In the discrete setting of computation the strategy consists of flipping the right-side of Hamiltonian equations sign with a given period. 
In the extreme case the sign flip takes place at any Euler discretization step. 

Here we report the application of the \textit{Hamiltonian Sign Flip} strategy
to the classic Linear Quadratic Tracking (LQT) problem by using a recurrent neural network based on a fully-connected digraph.
The purpose of the reported experiments is to validate the HSF policy, which is in fact
of crucial importance in order to exploit the power of the local propagation presented
in the paper, since the proposed policy enables on-line processing.  

\begin{figure}[t]
  \centering
  \begin{minipage}[b]{0.45\textwidth}
    \centering
    \includegraphics[width=\textwidth]{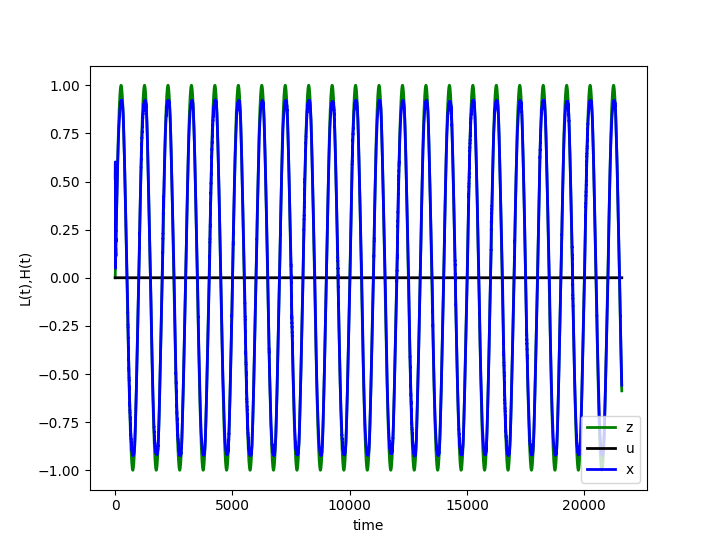}
    \caption{Recurrent net with $5$ neurons, $q=100$ (accuracy term), $r_w=1$ (weight regularization term), $r= 0.1$ (derivative of the weight term).
}
    \label{fig:figure1}
  \end{minipage}
  \hfill
  \begin{minipage}[b]{0.45\textwidth}
    \centering
    \includegraphics[width=\textwidth]{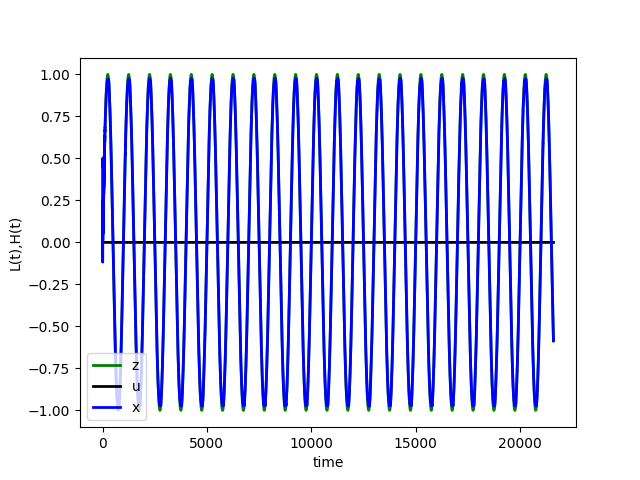}
    \caption{Recurrent net with $5$ neurons, $q=1000$ (accuracy term), $r_w=1$ (weight regularization term), $r= 0.1$ (derivative of the weight term)..}
    \label{fig:figure2}
  \end{minipage}
\end{figure}

The pre-algorithmic framework proposed in the paper, which is based on ODE
can promptly give rise to algorithmic interpretations by numerical solutions. 
In the reported experiments we used Euler's discretization.\\
\noindent \textit{Sinusoidal signals: The effect of the accuracy parameter.}
In this experiment we used a sinusoidal target and a  
recurrent neural network with five neurons, while the objective function was 
$G(v)=\int_0^T q x^2/2+rv^2/2 + r_w^2\, dt$, where we also introduced a regularization
term on the weights. 
The HSF policy gives rise to the expected approximation results. In plot we can
appreciate the effect of the increment of the accuracy term in Fig.~\ref{fig:figure2}.

\noindent \textit{Tracking under hard predictability conditions}
This experiment was conceived to assess the capabilities of the same 
small recurrent neural network with five neurons to track a signal which 
was purposely generated to be quite hard to predict. It is composed of 
patching intervals with cosine functions with constants.
\begin{figure}[t]
  \centering
  \begin{minipage}[b]{0.45\textwidth}
    \centering
    \includegraphics[width=\textwidth]{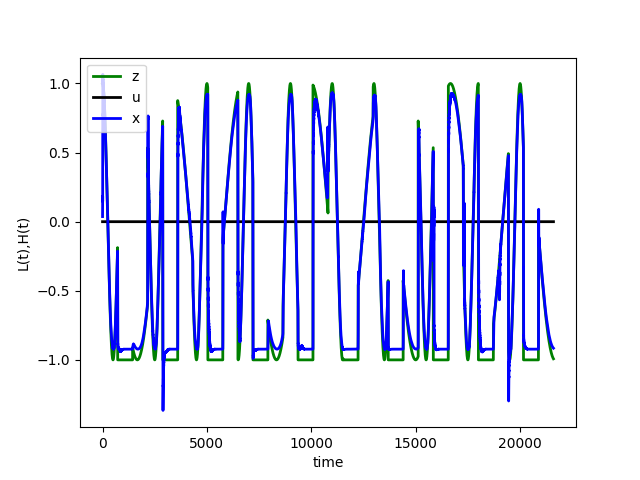}
    \caption{Tracking a highly-predictable signal: number of neurons: $5$,  $q = 100$ (accuracy), weight reg = $1$,  derivative of weight reg = $0.1$}
    \label{fig:figure3}
  \end{minipage}
  \hfill
  \begin{minipage}[b]{0.45\textwidth}
    \centering
    \includegraphics[width=\textwidth]{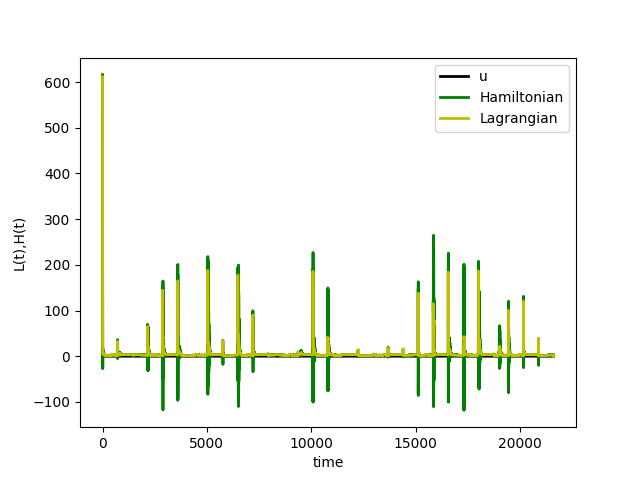}
    \caption{Evolution of the Lagrangian and of the Hamiltonian function for the experiment whose tracking is shown in the left-side figure. }
    \label{fig:figure4}
  \end{minipage}
\end{figure}
The massive experimental analysis on this and related examples confirms effectiveness
of the HSF policy shown in Fig.~\ref{fig:figure3}. The side figure shows the 
behavior of the Lagrangian and of the Hamiltonian term. Interestingly, the last
term gives us insights on the energy exchange with the environment.

\section{Conclusions}
This paper is motivated by the idea of a proposing learning scheme that, like in nature, arises without needing data collections, but simply by on-line processing of the environmental interactions. 
The paper gives two main contributions. 
First, it introduces a local spatiotemporal pre-algorithmic framework that is inspired to classic Hamiltonian equations. It is shown that the corresponding algorithmic formalization leads to  interpret  Backpropagation as a limit case of the proposed diffusion process in case of infinite velocity. This sheds light on the longstanding discussion on the biological plausibility of Backpropagation, since the proposed computational scheme is  local in both space and time. 
This strong result is indissolubly intertwined with a strong limitation. The theory enables such a locality under the assumption that the associated ordinary differential equations are solved as a boundary problem. 
The second result of the paper is that of proposing a method for approximating  the solution of the Hamiltonian problem with boundary conditions by using Cauchy's initial conditions. In particular we show that we can stabilize the learning process by appropriate schemes of time reversal that are related to focus of attention mechanisms. We provide experimental evidence of the effect of the proposed Hamiltonian Sign Flip policy for problems of tracking in automatic control.   

While the proposed local propagation scheme is optimal in the temporal setting and
overcomes the limitations of classic related learning algorithms like BPTT and RTRL,
the given results show that there is no free lunch:  The distinguishing feature
of spatiotemporal locality needs to be sustained by appropriate movement policies 
into the Hamiltonian Track. We expect that other solutions better than 
the HSF policy herein proposed can be developed when dealing with real-word problems. 
This paper must only be regarded as a theoretical contribution which offers
a new pre-algorithmic view of neural propagation. While the provided experiments
support the theory, the application to real-world problems need to activate substantial 
joint research efforts on different application domains.


\bibliography{refs}
\bibliographystyle{iclr2024_conference}

\clearpage
\appendix

\section{Optimal Control}\label{appendix:control}
The classical way in which Hamilton's equations are derived is through
Hamilton-Jacobi-Bellman theorem. So let enunciate this theorem in
a general setting. Here we use the notation $y=(x,\allweights)$ to
stand for the whole state vector and $p=(p_x,p_\allweights)$. We will
also denote with $\alpha$ the control parameters. Moreover to avoid
cumbersome notation in this appendix we will override the notation on the
symbols $n$ and $N$ and we will use them here to denote the dimension
of the state and of the control parameters respectively.

\subsection{Hamilton Jacobi Bellman Theorem}
Consider the classical state model
\begin{equation}\label{eq:state-model}
\dot \state(\time)= \f(\state(\time),\control(\time),\time),\quad t\in(\time_0,T]
\end{equation}
$\f\colon\R^\statedim\times\R^\controldim\times[\time_0,T]\to\R^\statedim$
is a Lipschitz function, $t\mapsto\control(t)$ is the trajectory of the parameters 
of the model, which is assumed to be a \emph{measurable function}
with assigned initial state
$\state^0\in
\R^\statedim$, that is
\begin{equation}\label{eq:state-initialization}
\state(\time_0)=\state^0.
\end{equation}
Let us now pose $\controlspace:=\{\control\colon[\time_0,T]\to\R^\controldim:
\control\hbox{ is measurable}\}$ and given a $\controlbis\in\controlspace$, and given an initial state $\state^0$,
we define the \emph{state trajectory}, that we indicate with
$t\mapsto x(t;\controlbis,\state^0,\time_0)$, the solution of~\eqref{eq:state-model} with
initial condition \eqref{eq:state-initialization}.

Now let us define a cost functional $C$ that we want to minimize:
\begin{equation}\label{eq:cost-functional}
\cost_{\state^0, \time_0}(\control):=
\int_{\time_0}^T \loss(\control(\time), \state(\time;\control,\state^0,\time_0), \time)\, d\time,
\end{equation}
where $\loss(\controlvar,\cdot,\timevar)$ is bounded and Lipshitz $\forall \controlvar\in\R^\controldim$ and
$\forall\timevar\in[\time_0,T]$.
Then the problem
\begin{equation}\label{eq:minimization-of-cost}
\min_{\control\in\controlspace} \cost_{\state^0,\time_0}(\control)
\end{equation}
is  a constrained minimization problem which is usually denoted
as \emph{control problem}~\cite{bardi1997optimal}, assuming that a solution exists.
The first step to address our constrained minimization problem is to define the \emph{value function} or \emph{cost to go}, that is a 
map $\valuef\colon\R^\statedim\times[\time_0,T]\to\R$ defined as 
\[\valuef(\statevar,\timevar):= 
\inf_{\alpha\in\controlspace} C_{\statevar,\timevar}(\alpha),\quad \forall(\statevar,
\timevar)
\in\R^\statedim\times[\time_0,T]\]
and the Hamiltonian function
$H\colon\R^\statedim\times\R^\statedim\times[\time_0,T]\to \R$ as
\begin{equation}\label{eq:hamiltonian}
H(\statevar,\costatevar,\timevar):=\min_{\controlvar\in\R^\controldim} 
\{\costatevar\cdot f(\statevar ,\controlvar,\timevar)+\loss(\controlvar,\statevar,\timevar)\},
\end{equation}
being $\cdot$ the dot product. Then Hamilton-Jacobi-Bellman theorem states
that
\begin{theorem}[Hamilton-Jacobi-Bellman] \label{th:HJB}
Let us assume that $D$ denotes the gradient operator with respect to $\statevar$.
Furthermore, let us assume that $\valuef\in
C^1(\R^\statedim\times[\time_0,T],\R)$ and that the minimum of $C_{\statevar,\timevar}$, Eq.~\eqref{eq:minimization-of-cost}, exists for
every $\statevar\in\R^\statedim$ and for every $\timevar\in[\time_0,T]$. Then
$\valuef$ solves the PDE
\begin{equation}\label{eq:HJB}
v_\timevar(\statevar,\timevar)+H(\statevar,Dv(\statevar,\timevar),\timevar)=0, 
\end{equation}
$(\statevar,\timevar)\in \R^\statedim\times[\time_0,T)$, with terminal condition $\valuef(\statevar,T)=0$, $\forall \statevar\in\R^\statedim$.
Equation~\ref{eq:HJB} is usually referred to as Hamilton-Jacobi-Bellman equation.
\end{theorem}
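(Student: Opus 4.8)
The plan is to derive the Hamilton--Jacobi--Bellman equation from the \emph{Dynamic Programming Principle} (DPP), which is the standard route for this classical result. The terminal condition is immediate: since $\cost_{\statevar,T}(\control)=\int_T^T\loss\,d\time=0$ for every admissible $\control$, we have $\valuef(\statevar,T)=0$ for all $\statevar\in\R^\statedim$.

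The crux is to establish the DPP: for every $(\statevar,\timevar)$ and every sufficiently small $h>0$,
\[
\valuef(\statevar,\timevar)=\inf_{\control\in\controlspace}\left\{\int_\timevar^{\timevar+h}\loss(\control(\time),\state(\time),\time)\,d\time+\valuef(\state(\timevar+h),\timevar+h)\right\},
\]
where $\state(\cdot)=\state(\cdot;\control,\statevar,\timevar)$ solves the state equation~\eqref{eq:state-model}. I would prove the two inequalities separately: the ``$\le$'' direction by prescribing an arbitrary control on $[\timevar,\timevar+h]$ and then following a near-optimal control on $[\timevar+h,T]$; the ``$\ge$'' direction by restricting any near-optimal control for $(\statevar,\timevar)$ to the two subintervals. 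Both directions rely on the fact that measurable controls can be concatenated across subintervals while remaining in $\controlspace$, and on the flow (semigroup) property of the state trajectory.

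Granting the DPP and the hypothesis $\valuef\in C^1$, the PDE follows from two matching one-sided limits. For the inequality $\valuef_\timevar+H\ge 0$, I fix an arbitrary constant control $\controlvar\in\R^\controldim$ on $[\timevar,\timevar+h]$, use the ``$\le$'' form of the DPP to obtain $\valuef(\statevar,\timevar)\le\int_\timevar^{\timevar+h}\loss(\controlvar,\state(\time),\time)\,d\time+\valuef(\state(\timevar+h),\timevar+h)$, subtract $\valuef(\statevar,\timevar)$, divide by $h$, and send $h\to 0$; the chain rule applied to $\time\mapsto\valuef(\state(\time),\time)$ together with $\dot\state(\timevar)=\f(\statevar,\controlvar,\timevar)$ yields $0\le\loss(\controlvar,\statevar,\timevar)+\valuef_\timevar(\statevar,\timevar)+D\valuef(\statevar,\timevar)\cdot\f(\statevar,\controlvar,\timevar)$. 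Taking the infimum over $\controlvar$ and recalling the definition~\eqref{eq:hamiltonian} of $H$ gives $\valuef_\timevar+H(\statevar,D\valuef,\timevar)\ge 0$. For the reverse inequality, I select for each $\eps>0$ a control $\control^\eps$ that is $\eps h$-optimal in the DPP, write $\valuef(\state^\eps(\timevar+h),\timevar+h)-\valuef(\statevar,\timevar)=\int_\timevar^{\timevar+h}\frac{d}{d\time}\valuef(\state^\eps(\time),\time)\,d\time$, and bound the resulting integrand from below using the pointwise inequality $\loss(\controlvar,\statevar,\time)+D\valuef(\statevar,\time)\cdot\f(\statevar,\controlvar,\time)\ge H(\statevar,D\valuef(\statevar,\time),\time)$ implied by~\eqref{eq:hamiltonian}; dividing by $h$ and letting $h\to 0$ and then $\eps\to 0$ gives $\valuef_\timevar+H(\statevar,D\valuef,\timevar)\le 0$. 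Combining the two inequalities yields~\eqref{eq:HJB}.

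The main obstacle is the rigorous proof of the DPP, specifically the measurable concatenation of controls and the careful construction of near-optimal controls that is uniform enough to pass to the limit. Once $\valuef\in C^1$ is assumed, the subsequent derivation of the PDE is routine, since $\time\mapsto\valuef(\state(\time),\time)$ is then $C^1$ and the chain rule applies; the only mild technical point is the continuity needed to evaluate the limits of the averaged integrals at $\time=\timevar$, which follows from the continuity of $\state(\cdot)$, of $\f$, and of $\loss$.
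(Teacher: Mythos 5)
Your proposal is correct and follows the same dynamic-programming route as the paper; the two arguments coincide on the first inequality (run a constant control $\controlvar$ on $[\timevar,\timevar+\eps]$, then an optimal continuation, compare with $\valuef(\statevar,\timevar)$, divide by $\eps$, let $\eps\to0$, and take the infimum over $\controlvar$), but they diverge on the reverse inequality. The paper exploits its hypothesis that a minimizer of $C_{\statevar,\timevar}$ exists: it picks an exact optimal control $\alpha^*$, invokes tail optimality (Bellman's principle, used without proof) to write $\valuef(\statevar,\timevar)=\int_\timevar^{\timevar+\eps}\loss(\alpha^*(\time),\state(\time),\time)\,d\time+\valuef(\state(\timevar+\eps),\timevar+\eps)$, and after dividing by $\eps$ obtains the \emph{equality} $\valuef_\timevar+D\valuef\cdot\f(\statevar,a^*,\timevar)+\loss(a^*,\statevar,\timevar)=0$ at the single point $a^*=\alpha^*(\timevar)$, which shows that the infimum is attained and equals zero. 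You instead state and prove the dynamic programming principle as an explicit lemma, use $\eps h$-optimal (rather than exact) controls, and bound the integrand from below pointwise by $H$ via its definition as an infimum. Your route is the standard textbook argument and is more robust: it does not need exact minimizers nor the unproved tail-optimality step, and it isolates the DPP as the real technical content (which you correctly flag as the main obstacle). The price is the extra work of proving the DPP and the uniformity needed to pass to the limit along near-optimal trajectories (continuity of $H$ and a bound keeping the trajectories close to $\statevar$ as $h\to0$). Conversely, the paper's argument is shorter given its stronger hypothesis, but it silently requires $\timevar$ to be a Lebesgue point of $\time\mapsto\loss(\alpha^*(\time),\state(\time),\time)$ (or continuity of $\alpha^*$ at $\timevar$) in order to identify the limit of the averaged integral with $\loss(a^*,\statevar,\timevar)$ --- exactly the kind of technical care that your near-optimal-control version makes explicit.
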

\begin{proof}
Let $\timevar\in[\time_0,T)$ and $\statevar\in\R^\statedim$.
Furthermore, instead of the optimal
control let us use a constant control $\alpha_1(\time)=\controlvar \in \R^\controldim$ for times $\time\in[\timevar,\timevar+\epsilon]$
and then the optimal control for the remaining temporal interval.
More precisely
let us pose
\[\alpha_2\in\argmin_{\alpha\in\controlspace} C_{\state(\timevar+\eps; \controlvar,\statevar,
\timevar),\timevar+\eps}(\alpha).\]
Now consider the following control
\begin{equation}
\alpha_3(\time)=
\begin{cases}
\alpha_1(\time) &  \text{if} \ \time\in[\timevar,\timevar+\eps)\\
\alpha_2(\time) &  \text{if} \ \time\in[\timevar+\eps,T] \ .
\end{cases}
\end{equation}
Then the cost associated to this control is
\begin{equation}
\begin{split}
C_{\statevar,\timevar}(\alpha_3)&=\int_{\timevar}^{\timevar+\eps} \loss(a, \state(\time;a,
\statevar,\timevar),\time)\,d\time \\
&+\int_{\timevar+\eps}^T \loss(\alpha_2(\time), \state(\time;\alpha_2,\statevar,
\timevar),\time)\,ds \\
&= \int_{\timevar}^{\timevar+\eps} \loss(a, \state(\time;a,
\statevar,\timevar),\time)\,d\time \\
&+ v(\state(\timevar+\eps; a,\statevar,\timevar),\timevar+\eps)
\end{split}
\end{equation}
By definition of value function we also have that $v(\statevar,\timevar)\le
C_{\statevar,\timevar}(\alpha_3)$. When  rearranging this inequality, dividing by
$\eps$, and making use of the above relation we have
\begin{equation}
\begin{split}
& \frac{v(\state(\timevar+\eps; a,\statevar,\timevar),\timevar+\eps)-v(\statevar,\timevar)}{\eps}+ \\
& \frac{1}{\eps}\int_\timevar^{\timevar+\eps} \loss(a, \state(\time;\controlvar,\statevar,\timevar),\time)\,d\time\ge0
\end{split}
\end{equation}
Now taking the limit as $\eps\to 0$ and making use of the fact that
$\state'(\timevar,\controlvar,\statevar,\timevar)=f(\statevar,\controlvar,\timevar)$
we get
\begin{equation}
v_\timevar(\statevar,\timevar)+ Dv(\statevar,\timevar)\cdot 
f(\statevar,\controlvar,\timevar)+\loss(\controlvar,\statevar,\timevar)\ge0.
\end{equation}
Since this inequality holds for any chosen $\controlvar\in\R^\controldim$ we can say that
\begin{equation}
\inf_{\controlvar\in \R^{\controldim}} \{v_\timevar(\statevar,\timevar)+ Dv(\statevar,\timevar)\cdot 
f(\statevar,\controlvar,\timevar)+\loss(\controlvar,\statevar,\timevar)\}\ge 0
\end{equation}
Now we show that the $\inf$ is actually a $\min$ and, moreover, that minimum is
$0$. To do this we simply choose
$\alpha^*\in\argmin_{\alpha\in\controlspace} C_{\statevar,\timevar}(\alpha)$ and denote
$a^*:=\alpha^*(\timevar)$, then 
\begin{equation}
\begin{split}
v(\statevar,\timevar)&= \int_\timevar^{\timevar+\eps} \loss(\alpha^*(\time), \state(\time;
\control^*,\statevar,\timevar),\time)\,d\time \\
&+v(\state(\timevar+\eps;\control^*,\statevar,\timevar).
\end{split}
\end{equation}
Then again dividing by $\eps$ and using that
$\state'(\timevar;\control^*,\statevar,\controlvar)=f(\statevar,\controlvar^*,\timevar)$
we finally get
\begin{equation}
v_\timevar(\statevar,\timevar)+ Dv(\statevar,\timevar)\cdot f(\statevar,
\controlvar^*,\timevar)+\loss(\controlvar^*,\statevar,\timevar)=0
\end{equation}
But since $a^*\in \R^{\controldim}$ and we knew that
$\inf_{\controlvar\in \R^{\controldim}} \{v_\timevar(\statevar,\timevar)+ Dv(\statevar,\timevar)\cdot 
f(\statevar,\controlvar,\timevar)+\loss(\controlvar,\statevar,\timevar)\}\ge 0$
it means that
\begin{align}
\begin{split}
&\inf_{a\in\R^\controldim} \{v_\timevar(\statevar,\timevar)
+ Dv(\statevar,\timevar)\cdot f(\statevar,\controlvar,\timevar)
+\loss(\controlvar,\statevar,\timevar)\}
= \\
&\min_{a\in\R^\controldim} \{v_\timevar(\controlvar,\timevar)+ Dv(\statevar,\timevar)\cdot f(\statevar,\controlvar,\timevar)+\loss(\controlvar,\statevar,\timevar\}=0.
\end{split}
\end{align}
Recalling the definition of $H$ we immediately see that
the last inequality is exactly (HJB). 
\end{proof}

\subsection{Hamilton Equations: The Method of Characteristics}
Now let us define $p(t)=D v(y(t),t)$ so that by definition of the
value function $p(T)=0$ which gives \eqref{eq:boundary}.
Also by differentiating this expression with
respect to time we have
\begin{equation}\label{eq:dotpHJ}
\dot p_k(t)=v_{\xi_k t}(y(t),t)+\sum_{i=1}^n v_{\xi_k\xi_i}(y(t),t)\cdot
\dot y_i.
\end{equation}
Now since $v$ solves \eqref{eq:HJB}, if we differentiate the Hamilton Jacobi equation by $\xi_k$ we obtain:
\[
v_{t\xi_k}(\xi,s)=-H_{\xi_k}(\xi, Dv(\xi,s),s)
-\sum_{i=1}^n H_{\rho_i}(\xi, Dv(\xi,s),s)\cdot v_{\xi_k\xi_i}(\xi,s).
\]
Once we compute this expression on $(y(t),t)$ and we
substitute it back into \eqref{eq:dotpHJ} we get:
\[
\dot p_k(t)=-H_{\xi_k}(y(t), Dv(y(t),t),t)
+\sum_{i=1}^n \Bigl[\dot y_i(t)-H_{\rho_i}(y(t), Dv(y(t),t),t)\Bigr]
\cdot v_{\xi_k\xi_i}(y(t),t).
\]
Now if we choose $y$so that it satisfies $\dot y(t)=H_\rho(y(t), p(t),t)$
the above equation reduces to
\[
\dot p=-H_\xi(y(t),p(t),t).
\]
Applying these equations to the Hamiltonian in \eqref{eq:hamiltonian-general}
we indeed end up with \eqref{eq:hamilton-general}.

\section{Proof of Theorem~\ref{HL-ff-eqs}}\label{append:main-theo}
From  \eqref{RecNeuralNeteq} and the hypothesis on $\ell$ we have that
\[\begin{aligned}
&f^k_{\xi_i}=-c_i\delta_{ik}+c_k\sigma'\Bigl(\sum_{j\in\pa(k)} w_{kj}x_j\Bigr)
\sum_{m\in\pa(k)}w_{km}\delta_{mi}, \qquad
\ell_\xi=L_{\xi_i}(x,t)\\
&f^k_{\u_{ij}}=  c_k \sigma'\Bigl(\sum_{m\in\pa(k)} w_{km}x_m\Bigr)
\sum_{h\in\pa(k)} \delta_{ik}\delta_{jh}x_h,\qquad
\ell_{\u_{ij}}= kV_{\u_{ij}}.
\end{aligned}\]
Then \eqref{eq:hamilton-general} becomes
\begin{equation}\label{eq:local_HJ-first-form}
\begin{cases}
c_i^{-1}\dot x_i= -x_i +\sigma\Bigl(\sum_{j\in\pa(i)} w_{ij}x_j\Bigr)\\
\dot w_{ij}=-p^{ij}_\allweights/(mc\phi)\\
\dot p^i_x=c_i p_x^i-\sum_{k=d+1}^n\sum_{m\in\pa(k)} c_k p_x^k
\sigma'\Bigl(\sum_{j\in\pa(k)} w_{kj}x_j\Bigr) w_{km}\delta_{mi}
- c L_{\xi_i}(x,t)\phi\\
\dot p^{ij}_\allweights(t)=
-\sum_{k=d+1}^n c_k p^k_x \sigma'\Bigl(\sum_{m\in\pa(k)} w_{km}x_m\Bigr)
\sum_{h\in\pa(k)} \delta_{ik}\delta_{jh}x_h
- c k V_{\u_{ij}}(\allweights,t)\phi
\end{cases}
\end{equation}

Now to conclude the proof it is sufficient to apply the
following lemma to conveniently rewrite and switch the sums in
the $\dot p$ equations.

\begin{lemma}\label{lemma:switch-sums}
Let $A$ be the set of the arches of a digraph as in Section~\ref{sec:model},
and let \eqref{eq:all_inputs_are_connected} be true, then
\[A=\{\, (m,k)\in A: k\in\{d+1,\dots,n\}\,\}=\{\,
(m,k)\in A : m\in\{1,\dots,n\}\,\}.\]
Equivalently we may say that $\sum_{k=d+1}^n\sum_{m\in\pa(k)}
=\sum_{m=1}^n\sum_{k\in\ch(m)}$.
\end{lemma}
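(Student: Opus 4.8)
The plan is to reduce the statement to the single structural observation that, under the graph conventions of Section~\ref{sec:model}, every arc of $D$ must terminate at a non-input vertex; once this is established, the two iterated sums are merely two bookkeeping schemes for the same index set $A$. I would therefore prove the chain of equalities from right to left and then read off the summation identity.

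First I would dispatch the rightmost set. The tail of any arc is a vertex of $D$, and $V=\{1,\dots,n\}$, so the condition $m\in\{1,\dots,n\}$ is vacuous; hence $\{(m,k)\in A: m\in\{1,\dots,n\}\}=A$ with no hypothesis needed. Next I would establish the genuinely nontrivial equality $A=\{(m,k)\in A: k\in\{d+1,\dots,n\}\}$. The inclusion $\supseteq$ is immediate. For $\subseteq$, take $(m,k)\in A$; then $m\in\pa(k)$, so $\pa(k)\neq\emptyset$. By \eqref{eq:input_do_not_have_parents} the parent sets of the input vertices $1,\dots,d$ are empty, so $k$ cannot lie in $\{1,\dots,d\}$, which forces $k\in\{d+1,\dots,n\}$. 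It is worth flagging that the work here is done entirely by \eqref{eq:input_do_not_have_parents} (input neurons have no parents); the hypothesis \eqref{eq:all_inputs_are_connected} cited in the statement is not actually required for the equality, and at most serves to guarantee that the index ranges are nonempty and that no input vertex is isolated.

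Finally, for the ``equivalently'' clause I would match the two iterated sums against $A$. The left-hand sum $\sum_{k=d+1}^n\sum_{m\in\pa(k)}$ ranges over the pairs $(m,k)$ with $k\in\{d+1,\dots,n\}$ and $m\in\pa(k)$, that is over $\{(m,k)\in A: k\in\{d+1,\dots,n\}\}$, which by the previous step is all of $A$. The right-hand sum $\sum_{m=1}^n\sum_{k\in\ch(m)}$ ranges over the pairs $(m,k)$ with $m\in\{1,\dots,n\}$ and $k\in\ch(m)$, i.e. again over all of $A$, using the definitional equivalence $m\in\pa(k)\Leftrightarrow (m,k)\in A\Leftrightarrow k\in\ch(m)$. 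Since each arc is enumerated exactly once under either scheme, the two summation operators agree on every summand.

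I do not expect any real obstacle: the only mathematical content is the equivalence between ``$m$ is a parent of $k$'' and ``$k$ is a child of $m$'', combined with the one structural fact that arcs never point into input vertices, and the remainder is pure reindexing. The sole point demanding care is identifying which of the two graph hypotheses actually supplies that fact, namely \eqref{eq:input_do_not_have_parents}.
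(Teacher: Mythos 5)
Your proof is correct, and at bottom it is the same bookkeeping argument as the paper's: identify both iterated sums with enumerations of the arc set $A$, using the equivalence $m\in\pa(k)\Leftrightarrow(m,k)\in A\Leftrightarrow k\in\ch(m)$ and the fact that no arc terminates at an input vertex. The one substantive difference is in which hypothesis carries the load. The paper's one-line proof invokes both \eqref{eq:input_do_not_have_parents} and \eqref{eq:all_inputs_are_connected}, whereas you correctly observe that \eqref{eq:input_do_not_have_parents} alone suffices: if $(m,k)\in A$ then $\pa(k)\neq\emptyset$, so $k\notin\{1,\dots,d\}$, which is exactly the statement that every arc's head lies in $\{d+1,\dots,n\}$. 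The connectivity condition \eqref{eq:all_inputs_are_connected} only excludes isolated input vertices, and such vertices would in any case contribute empty inner sums $\sum_{k\in\ch(m)}$ to the right-hand enumeration, leaving the identity intact. So your version is marginally sharper than the paper's: it proves the lemma under weaker hypotheses (only the standing convention \eqref{eq:input_do_not_have_parents} from Section~\ref{sec:model}), and it makes explicit the otherwise-silent reindexing step that each arc is counted exactly once by either scheme.
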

\begin{proof}
It is an immediate consequences of the fact that the first $d$ neurons
are all parents of some neuron in $\{d+1,\dots, n\}$
(\eqref{eq:all_inputs_are_connected})
and that they do not have themselves any parents
(\eqref{eq:input_do_not_have_parents}).
\end{proof}

\section{Proof of Proposition~\ref{prop:second-order-dyn}}\label{append:gradient}
The equations for $\ddot w_{ij}$ simply follows from differentiation of
the expression for $\dot w_{ij}$ in \eqref{eq:local_HJ-second-form} and the
by the usage of the equation for $\dot p^{ij}_\allweights$ together with
the definition for $\lambda^i_x$ in \eqref{eq:rescaled-costates}.
The equation for $\dot\lambda$ in \eqref{eq:second-order-eq} instead
can be obtained by differentiating with respect to time
\eqref{eq:rescaled-costates} and then using the expression of $\dot p^i_x$
from \eqref{eq:local_HJ-second-form}.

\section{Proof of Proposition~\ref{prop:rnn-stability}}\label{rnn-stability}
Let $\mu(t):=\sigma\Bigl(\sum_{j\in\pa(i)} w_{ij}x_j(t)\Bigr)$ be. 
From the boundedness of $\sigma(\cdot)$ we know that there exists $B>0$ 
such that $|\mu(t)| \leq B$. Now we have 
\begin{align*}
    x_{i}(t) =  x_{i}(0) e^{-\alpha t}
    + \int_{0}^{t} e^{-\alpha(t-\tau)} u(\tau) d\tau
    &\leq x_{i}(0) + B \int_{0}^{t} e^{-\alpha(t-\tau)} d\tau\\
    &\leq x_{i}(0) + \frac{B}{\alpha} (1-e^{-t}) < x_{i}(0) + \frac{B}{\alpha}
\end{align*}

\end{document}
